\newcommand{\nm}{\mid\joinrel\sim}
\newcommand{\N}{I\kern-4ptN}
\newcommand{\qto}{\mathrel{?\kern-4pt\to}}
\newcommand{\xor}{\mathrel{\vee\kern-0.73em-}}
\title{On resolving conflicts between arguments}
\begin{document}
\setcounter{page}{469}
\maketitle
\author{Nico Roos}

\affil{Institute for Knowledge and Agent Technology, Maastricht
University,
\\ P.O. Box 616, 6200 MD Maastricht, The Netherlands, e-mail:
roos@cs.unimaas.nl}

\begin{abstract}
Argument systems are based on the idea that one can construct 
arguments for propositions; i.e., structured reasons justifying 
the belief in a proposition. Using defeasible rules, arguments 
need not be valid in all circumstances, therefore, it might be 
possible to construct an argument for a proposition as well as its 
negation. When arguments support conflicting propositions, one of
the arguments must be defeated, which raises the question of
\emph{which (sub-)arguments can be subject to defeat}?

In legal argumentation, meta-rules determine the valid arguments by
considering the last defeasible rule of each argument involved in
a conflict. Since it is easier to evaluate arguments using their
last rules, \emph{can a conflict be resolved by considering only
the last defeasible rules of the arguments involved}?

We propose a new argument system where, instead of deriving a
defeat relation between arguments, \emph{undercutting-arguments} 
for the defeat of defeasible rules are constructed. This system 
allows us, 
(\textit{i}) to resolve conflicts (a generalization of rebutting arguments) 
using only the last rules of the arguments for inconsistencies, 
(\textit{ii}) to determine a set of valid (undefeated) arguments in linear time
using an algorithm based on a JTMS, 
(\textit{iii}) to establish a relation with Default Logic, and 
(\textit{iv}) to prove closure
properties such as \emph{cumulativity}. We also propose an
extension of the argument system that enables \emph{reasoning
by cases}.
 \kw{argumentation, defeasible rules, non-monotonic reasoning.}
\end{abstract}
\footnotetext{This paper extends (Roos 1997b).}

\vspace*{8mm}

\noindent
\emph{This revised version of the paper provides a more clear 
explanation of reasoning by cases (Section \ref{RbC}), fixes an issue in the proof of the closure property Cautious Monotony mentioned in 
Theorem \ref{closure-prop}, and corrects several small errors.}

\section{Introduction}
Argument systems originate from philosophy (Toulmin 1958). More
recently they have also been studied in AI (Bondarenko et al.
1997; Cayrol 1995; Dung 1993; Dung 1995; Fox et al. 1992; Geffner
1994; Hunter 1994; Kraus et al. 1995; Lin \& Shoham 1989; Loui
1987; Loui 1998; Pollock 1987, 1992, 1994; Poole 1988; Prakken
1993; Prakken \& Vreeswijk 1999; Simari \& Loui 1992; Vreeswijk
1991, 1997). When such argument systems are used for reasoning
with defeasible rules (Fox et al. 1992; Geffner 1994; Hunter 1994;
Kraus et al. 1995; Loui 1987; Pollock 1987; Prakken 1993; Prakken
\& Vreeswijk 1999; Simari \& Loui 1992; Vreeswijk 1991, 1997), a
rule is viewed as a justification for believing the consequent of
the rule whenever we have a justification for believing its
antecedent (Toulmin 1958). A justification for believing the
antecedent can consist of facts about the world, denoted as
evidence or premises, and of propositions that are justified by
other defeasible rules. So, we can construct a tree of defeasible
rules that justifies the belief in some proposition with respect
to some evidence. This tree is called an {\em argument} for the
proposition.

Since the rules used in the construction of arguments are
defeasible, it might be possible to construct an argument for a
proposition as well as its negation. Clearly, only one of these
arguments can give a valid justification for the proposition it
supports. In most argument systems proposed in the literature, one
of the arguments supporting the conflicting propositions; i.e.\ for
a proposition and its negation, is defeated (Loui 1987; Pollock
1987, 1992, 1994; Prakken 1993; Prakken \& Vreeswijk 1999; Simari
\& Loui 1992; Vreeswijk 1991, 1997). 
Generally, if an argument is defeated, there is a defeated sub-argument 
(not necessarily a proper sub-argument) that has a single last defeasible rule,
and no sub-argument of this sub-argument is defeated.
An exception are arguments that are based on defeasible causal relations. 
Geffner (1994), for example, allows the chain of causal arguments to
be broken at any rule of an argument if the proposition supported
by the argument conflicts with an observed fact. 

%Since a sub-argument is
%defeated only if it contains the last defeasible rule with respect
%to the line of argumentation, of the defeated argument, this
%approach can be interpreted as defeating the last defeasible rule
%of the argument. Yet others have argued that one should not only
%consider the last rule as a candidate for defeat (Geffner 1994).
%Geffner (1994), for example, allows the chain of argumentation to
%be broken at any rule of an argument if the proposition supported
%by an argument, conflicts with an observed fact. Unlike other
%authors, his defeasible rules represent causal relations.

Defeasible rules representing causal relations have one property,
not found in non causal defeasible rules. Causal defeasible rules
can be used in contraposition. This raises the following question.
{\em If rules cannot be used in contraposition, can a conflict be
resolved by defeating a last defeasible rule of the argument
supporting one of the conflicting propositions}?

In legal argumentation, meta rules are used to resolve conflicts
(Prakken 1993). These meta rules determine the valid arguments by
considering the last defeasible rule, with respect to the chain of
argumentation, of each proposition involved in the conflict.
Hence, an argument should not only reflect the information used in
an argumentation, but also the structure of the argumentation.

Some argument system represent this structure explicitly
(Vreeswijk 1991, 1997), while others represent it
implicitly (Pollock 1987, 1992). Vreeswijk (1991, 1997) only
considers defeasible rules that are definite horn clauses and a
special symbol $\perp$ to denote an inconsistency. In this
language, each argument for a conflict; i.e.\ for $\perp$, is
unique. If we use, however, full propositional or predicate logic,
there can be more than one way of argumentation for deriving `the
same conflict'.

Suppose for example that we have the following three defeasible
rules: $\{ a \leadsto \neg d, b \leadsto \neg e, c \leadsto (d
\vee e) \}$ and the facts: $\{ a, b, c \}$. Then we can construct
arguments for conflicting propositions in at least three different
ways.
\[ \left. \begin{array}{r} \left. \begin{array}{r} a \\
a \leadsto \neg d \end{array} \right\} \neg d \\ \left.
\begin{array}{r} c \\ c \leadsto (d \vee e) \end{array} \right\}
d \vee e \end{array} \right\} e \mbox{ and } \left.
\begin{array}{r} b \\ b \leadsto \neg e \end{array} \right\} \neg
e \]
\[ \left. \begin{array}{r} \left. \begin{array}{r} b \\
b \leadsto \neg e \end{array} \right\} \neg e \\ \left.
\begin{array}{r} c \\ c \leadsto (d \vee e) \end{array} \right\}
d \vee e \end{array} \right\} d \mbox{ and } \left.
\begin{array}{r} a \\ a \leadsto \neg d \end{array} \right\} \neg
d \]
\[ \left. \begin{array}{r} \left. \begin{array}{r} b \\
b \leadsto \neg e \end{array} \right\} \neg e \\ \left.
\begin{array}{r} a \\ a \leadsto \neg d \end{array} \right\} \neg d
\end{array} \right\} \neg d \wedge \neg e \} \neg(d \vee e) \mbox{ and }
\left. \begin{array}{r} c \\ c \leadsto d \vee e \end{array}
\right\} d \vee e \]
 Each of these three couples of arguments supporting conflicting
propositions, uses exactly the same information to derive a
conflict. We would therefore expect that we have only one argument
for the conflict instead of three.

The heart of the problem is that the consequences of the three
defeasible rules used in the above presented arguments are
logically inconsistent. How these three rules are used to derive
an inconsistency, should not matter. The derivation of the
inconsistency takes place through logically sound deductions which
cannot be subject to defeat. The line of reasoning that is
followed using the logical sound deductions should not
matter. Therefore, in the definition of an argument given in the
next section, we abstract away from the actual logical sound deductions. For
the same reason, we consider arguments for inconsistencies
$\perp$, instead of arguments of the conflicting propositions.

As mentioned above, in legal argumentation only the last rules for
an inconsistency are considered in order to resolve the
inconsistency. Using a preference relations, one of the last rules
is identified as the culprit. There are $\frac{1}{2} n (n-1)$
pairs of rules between which there can be a preference, where $n$
is the number of defeasible rules. If we use a preference on the
arguments, ignoring the structure of the arguments, there can be
at most $\frac{1}{2} 2^n (2^n-1)$ pairs of arguments between which
there can be a preference. Clearly, it is easier to evaluate
arguments using their last rules, than using the whole argument.
We will therefore investigate the following question. {\em Is
there a need for considering more than only the last rules of an
argument for an inconsistency}?

The next section formalizes the arguments that can be constructed
using defeasible rules. The here defined arguments do not only
represent the defeasible rules that are used, but also the line of
reasoning. Section 3 discusses whether we can resolve an
inconsistency just by defeating one of the last defeasible rules
of the argument for the inconsistency. Section 4 investigates
whether we can select the rule (or argument) to be defeated just
be looking at the last rules of the argument for the
inconsistency. Based on the results of Sections 3 and 4, Section 5
proposes a new argument system. What is essentially new is that
inconsistencies are resolved by constructing an argument for the
{\em undercutting} defeat of one of the defeasible rules of the
argument for the inconsistency.
%\footnote{We could define the
%argument system independent of the results of Sections 3 and 4
%using some defeat/preference relation on arguments. In that case
%the argument system defined in Section 5 would be a special case.}
Section 6 discusses how to compute an extension and presents a
linear time algorithm for doing so. Section 7 discusses closure
properties of the argument system and the relation with
default logic. Section 8 presents an extension of the presented
argument system that enables reasoning by cases. Finally, Section
9 discusses related work and Section 10 concludes the paper.

\section{The argument system} \label{argument-sys}
We will derive arguments using a defeasible theory $\langle
\Sigma, D \rangle$. Here, $\Sigma$ represents a set of premises
and $D$ represents a set of defeasible rules. The set of premises
$\Sigma$ is a subset of the propositional logic $L$. $L$ is
recursively defined from a set of atomic propositions $At$ and the
operators $\neg$, $\wedge$ and $\vee$.

For every defeasible rule $\varphi \leadsto \psi \in D$ there
holds that $\varphi$ is a proposition in $L$ and that $\psi$ is
either a proposition in $L$ or the negation of a defeasible rule
in $D$; i.e.\ $\psi = \neg ( \alpha \leadsto \beta)$ and $\alpha
\leadsto \beta \in D$. The negation of a defeasible rule $\neg (
\alpha \leadsto \beta)$ will be interpreted as: `$\alpha$ may no
longer justify $\beta$'. So the negation of a rule explicitly
blocks the conclusive force of the defeasible rule. It will be
used to describe the {\em undercutting} defeat of rule. If
we have a valid argument for $\neg ( \alpha \leadsto \beta)$, then
no argument containing the rule $\alpha \leadsto \beta$ can be
valid. For example, we can undercut the rule: `something that
looks red is red' by the rule: `something that stands below a red
light need not be red if it looks red'.

Notice the correspondence of the defeasible rules $\alpha \leadsto
\beta$ and $\varphi \leadsto \neg(\alpha \leadsto \beta)$ with
respectively the semi- and non-normal default rules $\frac{\alpha
: \beta, \omega_1}{\beta}$ and $\frac{\varphi :
\omega_2}{\neg \omega_1}$ where $\omega_1$ and $\omega_2$
summarize the exceptions on the default rules. Also notice the
difference with Nute's (1988, 1994) defeater rule $\varphi \qto
\neg \beta$. If we have a valid argument for $\varphi$, Nute's
defeater rule $\varphi \qto \neg \beta$ defeats {\em any} argument
containing a rule of the form $\alpha \leadsto \beta$. We can,
however, use the defeater rule $\varphi \wedge \alpha \qto \neg
\beta$ to describe $\varphi \leadsto \neg ( \alpha \leadsto
\beta)$.

In an argument system, a defeasible rule is viewed as a
justification for believing the consequent of the rule whenever we
have a justification for believing its antecedent (Toulmin 1958).
A justification for believing the antecedent can consist of facts
about the world, denoted as evidence or premises, and of
propositions that are justified by other defeasible rules. So, we
can construct a tree of defeasible rules that justifies the belief
in some proposition with respect to some evidence. This tree is
called an {\em argument} for the proposition.

Logically sound deduction steps need not be represented explicitly
in an argument. None of these deduction steps can be subject to
defeat. Only the relations described by defeasible rules need not
be valid in all circumstances.

\begin{definition} \label{argument}
Let $\langle \Sigma, D \rangle$ be a defeasible theory where
$\Sigma$ is the set of premises and $D$ is the set of rules.

Then an argument\footnote{We will sometimes add the index $\psi$
to an argument ($A_\psi$) to denote that it is an argument for
$\psi$. Of course there can be more than one argument for $\psi$.}
$A$ for a proposition $\psi$ is recursively defined in the
following way:
\begin{itemize}
\item
For each $\psi \in \Sigma$: $A = \{ \langle \emptyset, \psi
\rangle\}$ is an argument for $\psi$.
\item
Let $A_1, ..., A_n $ be arguments for respectively
$\varphi_1,...,\varphi_n$. If $\varphi_1,...,\varphi_n \vdash
\psi$, then $A = A_1 \cup ... \cup A_n$ is an argument for $\psi$.
\item
For each $\varphi \leadsto \psi \in D$ if $A'$ is an argument for
$\varphi$, then $A = \{ \langle  A', \varphi \leadsto \psi \rangle
\}$ is an argument for $\psi$.
\end{itemize}

Let $A = \{ \langle  A'_1, \alpha_1 \rangle ,...,
\langle A'_n, \alpha_n, \rangle \}$.
Then:
\[ \begin{array}{l}
\vec{A} = \{ \alpha_1,...,\alpha_n \} \cap D ;  \\
\hat{A} = \{ c(\alpha_1) ,..., c(\alpha_n) \} \mbox{ where
$c(\alpha) = \alpha$ if $\alpha \in L$ and,
$c(\alpha \leadsto \beta) = \beta$} ;  \\
\tilde{A} = \{ \alpha_i \mid 1 \leq i \leq n,\alpha_i \in D \} \cup
\bigcup^n_{i=1} \tilde{A}'_i ;  \\
\bar{A} = \{ \alpha_i \mid 1 \leq i \leq n,\alpha_i \in \Sigma \} \cup
\bigcup^n_{i=1} \bar{A}'_i
\end{array} \]
\end{definition}

\begin{example}
    Let $A = \{ \langle \emptyset, \alpha \rangle,
    \langle \{ \langle \{ \langle \emptyset, \beta \rangle \},
    \beta \leadsto \gamma \rangle \},
    \gamma \leadsto \delta \rangle \}$
    be an argument for $\varphi$.
    \[ \left. \begin{array}{r} \alpha  \\
    \beta \vdash \beta \leadsto \gamma \vdash \gamma \leadsto \delta
    \end{array} \right | \hspace{-5pt}- \varphi \]
    Then $\vec{A} = \{ \gamma \leadsto \delta \}$ denotes the last rules
    used in the argument $A$.
    Furthermore, $\hat{A} = \{ \alpha, \delta \}$ denotes the
    propositions that represent the beliefs $Th(\{ \alpha, \delta \})$
    supported by the argument $A$.
    Clearly, $A$ is an argument for every proposition
    $\varphi \in Th(\{ \alpha, \delta \})$.
    $\tilde{A} = \{ \gamma \leadsto \delta, \beta \leadsto \gamma \}$
    denotes the set of all rules in $A$,
    and $\bar{A} = \{ \alpha, \beta \}$ denotes the
    premises used in the argument $A$.
\end{example}

In the above definition of an argument, we do not apply the
contraposition of a defeasible rule in the construction of an
argument. In general, the contraposition of a defeasible rule is
invalid. A rule describes that its consequent should hold or
probably holds in context described by its antecedent. By no means
this implies that the antecedent does not hold if the consequent
does not hold.

If the defeasible rule is interpreted as describing a preference,
the negation of the consequent does not imply that the negation
the antecedent should hold. A rule describes what should hold in
the context described by its antecedent. The converse need not
hold. So, knowing that John may not drive a car, we may not
conclude that he does not own a driving license. It may just be
the case that we have an exceptional situation, e.g. John is
drunk, John has collected too many speeding tickets, John may not
drive a car on doctors orders, and so. Especially if most people
own a driving license, an exceptional situation need not be
unlikely.

Also if the defeasible rule is interpreted as describing a
conditional probability, $Pr(\psi \mid \varphi) >t$ does not imply
that $Pr(\neg \varphi \mid \neg \psi) >t$. In fact, if $Pr(\psi
\mid \varphi) < 1$, $Pr(\neg \varphi \mid \neg \psi)$ can have any
value in the interval $[0,1]$. Only in the event that we also know
the a priori probabilities of $Pr(\varphi)$ and of $Pr(\psi)$, we
can verify whether $Pr(\neg \varphi \mid \neg \psi) >t$ holds.

{\em Causal rules} are a special kind of defeasible rules that do
possess a contraposition (Geffner 1994). If, `{\em normally},
$\varphi$ {\bf causes} $\psi$', then $\neg \psi$ implies $\neg
\varphi$, unless we have an exceptional situation. Such a rule can
be described by a conditional probability, as is done in Bayesian
Belief Networks. This description is incomplete unless we know or
we can calculate the a priori probabilities of the antecedent and
the consequent. Bayesian Belief Networks guarantee the latter.
Here, however, we do not have this information. Therefore, to
guarantee that the contraposition is applied correctly, we need a
specialized approach. Geffner (1994) discusses the properties of
such an approach. In the remainder of this paper, however, we will
not consider causal rules.

Two arguments can be related to each other. The relation that is
of interest for us is whether one argument uses the same inference
steps as another argument. If so, the former is called a
sub-argument of the latter. Though an argument can be viewed as a
tree, a sub-argument is not exactly a sub-tree.
%A sub-argument cannot be a sub-tree because a tree representing
%an argument consists of two different types of nodes.

\begin{definition}
An argument $A$ is a sub-argument of $B$, $A \leq B$, if and only
if every $\langle A', \alpha \rangle \in A$ is a sub-structure of
the argument $B$.\footnote{Notice that we reach the base of the
recursion if $A$ is an empty set. If $A$ is an empty set, it is
trivial that every $\langle A', \alpha \rangle \in A$ is a
sub-structure of the argument $B$.}

$\langle A', \alpha \rangle$ is a sub-structure of an argument
$B$ if and only if
 \begin{itemize}
    \item   either there exists a $\langle  B', \alpha \rangle \in B$ such
    that $A'$ is a sub-argument of $B'$;

    \item  or there exists a $\langle B', \beta \rangle \in B$ such that
    $\langle  A', \alpha \rangle$ is a sub-structure of $B'$.
 \end{itemize}
\end{definition}
\begin{example}
    let $A = \{ \langle  \emptyset, \alpha \rangle,
    \langle \{ \langle \{ \langle \emptyset, \beta \rangle \},
    \beta \leadsto \gamma \rangle \},
    \gamma \leadsto \delta \rangle \}$
    be an argument for $\varphi$.
    \[ \left. \begin{array}{r} \alpha  \\
    \beta \vdash \beta \leadsto \gamma \vdash \gamma \leadsto \delta
    \end{array} \right | \hspace{-5pt}- \varphi \]
    Then
    \[ A_1 = \{ \langle  \emptyset, \alpha \rangle,
    \langle \{ \langle \{ \langle \emptyset, \beta \rangle \},
    \beta \leadsto \gamma \rangle \},
    \gamma \leadsto \delta \rangle \} \]
        \[ \left. \begin{array}{r} \alpha  \\
    \beta \vdash \beta \leadsto \gamma \vdash \gamma \leadsto \delta
    \end{array} \right | \]
    \[ A_2 = \{ \langle \emptyset, \alpha \rangle,
    \langle \{ \langle \emptyset, \beta \rangle \},
    \beta \leadsto \gamma \rangle \} \]
        \[ \left. \begin{array}{r} \alpha  \\
    \beta \vdash \beta \leadsto \gamma
    \end{array} \right | \]
    \[ A_3 = \{ \langle \emptyset, \alpha \rangle,
    \langle \emptyset, \beta \rangle \} \]
        \[ \left. \begin{array}{r} \alpha  \\
    \beta
    \end{array} \right | \]
    \[ A_4 = \{ \langle \emptyset, \alpha \rangle \} \]
        \[ \alpha \mid \]
    \[ A_5 = \{ \langle \{ \langle \{ \langle \emptyset, \beta \rangle \},
    \beta \leadsto \gamma \rangle \}, \gamma \leadsto \delta \rangle
    \} \]
        \[ \beta \vdash \beta \leadsto \gamma \vdash \gamma \leadsto
        \delta \mid \]
    \[ A_6 = \{ \langle \{ \langle \emptyset, \beta \rangle \},
    \beta \leadsto \gamma \rangle \} \]
        \[ \beta \vdash \beta \leadsto \gamma \mid \]
    \[ A_7 = \{ \langle \emptyset, \beta \rangle \} \]
        \[ \beta \mid \]
    are sub-arguments of $A$.
\end{example}

An argument represents a derivation tree of defeasible rules.
Since a rule in an argument $A$ gives a justification for its
consequent, the argument can be viewed as a global justification
for a proposition $\varphi$, $\hat{A} \vdash \varphi$, that is
grounded in the premises $\bar{A}$. Whether an argument is valid
depends on whether the argument or one of its sub-arguments is
defeated. When an argument $A$ for some proposition $\varphi$ is
valid we say that $\varphi$ follows from the premises $\bar{A}$
using the rules $\tilde{A}$.

\section{Defeating a last rule of an argument}
 \label{conflicts}
A defeasible rule $\varphi \leadsto \psi$ describes either a
preferred or a probabilistic relation. Therefore, there may exist
situations in which the relation it represents, is invalid. In
these exceptional situations, either $\neg \psi$ must holds or
both $\psi$ and $\neg \psi$ must be unknown. Since an argument is
basically a tree constructed using defeasible rules, an argument
containing a rule that is not valid in the current context, can
neither be valid.
%In general, we assume an argument to be valid as
%long as there is no reason to conclude otherwise.

There are two reasons for an argument to become invalid. Either
the argument contains a rule $\alpha \leadsto \beta$ while we have
a valid argument for $\neg(\alpha \leadsto \beta)$, or the
argument is a sub-argument of an argument for an inconsistency. In
the latter situation the question is, which sub-argument(s) of the
argument for an inconsistency, can no longer be valid? In the
discussion of this question, we will use the term {\em disagreeing
arguments} which is defined in the following way.
\begin{definition}
Let $A_\perp = \{ \langle A'_1, \mu_1 \rangle,...,\langle A'_n,
\mu_n \rangle \}$ be an argument for and inconsistency
($\hat{A}_\perp \vdash \perp$).

Then, the arguments $A_1 = \{ \langle  A'_1, \mu_1 \rangle \},...,
A_n = \{ \langle A'_n, \mu_n \rangle \}$ are said to {\em
disagree}.
\end{definition}

Clearly, in order to restore consistency, some of the disagreeing
arguments can no longer be valid. These arguments are said to be
defeated because of the other arguments. It is also clear that it
is sufficient to defeat only one of the disagreeing arguments in
order to restore consistency if the argument for the inconsistency
is a (subset) minimal argument. Without lost of generality, we may assume
that the argument for the inconsistency is a minimal argument.
Resolving inconsistencies using the minimal arguments will also
resolve inconsistencies based on non minimal arguments. We can
therefore reformulate the above raised question. Is it sufficient
to defeat a disagreeing argument, but no proper sub-argument of
this disagreeing argument, to resolve an inconsistency? We will
see that a set of defeasible rules can always be extended such
that indeed no proper sub-argument of a disagreeing argument needs
to be defeated.

Let \[ A_1 = \{ \langle  A'_1, \mu_1 \rangle \},..., A_n = \{
\langle A'_n, \mu_n \rangle \} \] be a set of disagreeing
arguments; i.e.\ $A =\bigcup^n_{i=1} A_i$ is an argument for
$\perp$. Suppose that some proper sub-argument $A_\varphi = \{
\langle A', \alpha \leadsto \varphi \rangle \}$ of the disagreeing
argument $A_k = \{ \langle  A'_k, \mu_k \rangle \}$ is defeated
because of the inconsistency and that no proper sub-argument of
$A_\varphi$ is defeated. Then, $\bigcup_{i=1}^n \bar{A}_i$
represents an exceptional situation in which either $\neg \varphi$
holds or $\varphi$ is unknown.

Suppose that $\neg \varphi$ holds. We cannot use the
contraposition of the rules to derive $\neg \varphi$. We can,
however, introduce rules that enable us to construct an argument
$A_{\neg \varphi}$ for $\neg \varphi$ such that $\bar{A}_{\neg
\varphi} \subseteq \bigcup_{i=1}^n \bar{A}_i$. In that case
$A_\varphi$ is a disagreeing argument in another inconsistency.
This inconsistency can be used to defeat $A_\varphi$. Hence, there
is no need for defeating $A_\varphi$ because of the argument $A$
for $\perp$. For example, let
\[ A_\perp = \{ \langle \{ \langle
\{ \langle \emptyset, \alpha \rangle \},
\alpha \leadsto \varphi \rangle \}, \varphi \leadsto \eta \rangle,
\langle \{ \langle \{ \langle \emptyset, \beta \rangle \}, \beta
\leadsto \psi \rangle \}, \psi \leadsto \neg \eta \rangle \} \]
\[ \left. \begin{array}{r} \alpha \vdash \alpha \leadsto \varphi
\vdash \varphi \leadsto \eta \\ \beta \vdash \beta \leadsto \psi
\vdash \psi \leadsto \neg \eta
\end{array} \right | \hspace{-4pt} - \perp \]
be an argument for $\perp$. Then we can defeat $\alpha \leadsto
\varphi$ by introducing the rule $\alpha \wedge \beta \leadsto
\neg \varphi$.

Now suppose that $\varphi$ is unknown. 
%Let $A_\varphi$ be the smallest ($\leq$) sub-argument that is 
%defeated because of the inconsistency. 
We cannot introduces rules that enable us to construct an argument 
$A_{\neg\varphi}$ since $\varphi$ is unknown. We can, however,
introduce rules that enable us to construct an argument $A_{\neg
(\alpha \leadsto \varphi)} = \{ \langle A', \xi \leadsto \neg
(\alpha \leadsto \varphi) \rangle \}$ for $\neg (\alpha \leadsto
\varphi)$ such that $\bar{A}_{\neg (\alpha \leadsto \varphi)}
\subseteq \bigcup_{i=1}^n \bar{A}_i$. Since $A_\varphi = \{
\langle A', \alpha \leadsto \varphi \rangle \}$ is defeated if
$A_{\neg (\alpha \leadsto \varphi)}$ is valid, there is no need
for defeating $A_\varphi$ because of the argument $A$ for $\perp$.
For example, let
\[ A_\perp = \{ \langle \{ \langle
\{ \langle \emptyset, \alpha \rangle \},
\alpha \leadsto \varphi \rangle \}, \varphi \leadsto \eta \rangle,
\langle \{ \langle \{ \langle \emptyset, \beta \rangle \}, \beta
\leadsto \psi \rangle \}, \psi \leadsto \neg \eta \rangle \} \]
\[ \left. \begin{array}{r} \alpha \vdash \alpha \leadsto \varphi
\vdash \varphi \leadsto \eta \\ \beta \vdash \beta \leadsto \psi
\vdash \psi \leadsto \neg \eta
\end{array} \right | \hspace{-4pt} - \perp \]
be an argument for $\perp$. Then we can defeat $\alpha \leadsto
\varphi$ by introducing the rule $\alpha \wedge \beta \leadsto
\neg (\alpha \leadsto \varphi)$.

Hence, we can avoid the need for defeating a proper sub-argument
of a disagreeing argument, if necessary by introducing additional
rules. Since a disagreeing argument has a unique last rule,
defeating a disagreeing argument implies defeating its {\em last
rule}. Hence, it suffice to defeat one of the last rules $\vec{A}$
of an argument $A$ for an inconsistency.

\section{A preference relation on rules}
%\section{Evaluating an argument using only its last rules}
 \label{evaluate}
In the previous section we have seen that no proper sub-argument
of one of the disagreeing arguments needs to be subject to defeat.
This makes it possible to defeat a disagreeing argument by
defeating its {\em last rule}. We will now investigate whether we
can determine the rule to be defeated by considering only the last
rule of each of the disagreeing arguments; i.e.\ the last rules of
the argument for the inconsistency.

Defeating the last rule of one of the disagreeing arguments in
case of an inconsistency offers three advantages. Firstly, we on
longer have to consider a defeat relation between arguments as is
done in: (Pollock 1987, 1994; Simari \& Loui 1992; Vreeswijk
1997). This significantly simplifies the preference relation that
we must consider. If we use a preference relation on rules, then
there are $\frac{1}{2} n (n-1)$ pairs of rules between which there
can be a preference, where $n$ is the number of defeasible rules.
If we use a preference on the arguments, ignoring the structure of
the arguments, there can be $\frac{1}{2} 2^n (2^n-1)$ pairs of
arguments between which there can be a preference.

Secondly, an argument looses its conclusive force (is defeated) if
it contains defeated rules. This simplifies the handling of
arguments. And, as we will see in Section \ref{algor}, it enables
us to determine a set of valid arguments in linear time.

Thirdly, the resolution of inconsistencies will be cumulative. It
does not matter whether the antecedent of a last rule is an
observed fact or derived through reasoning. This is an important
property since an observed fact may be based on some hidden
reasoning of which we are not aware.

To show that there is no need for a preference relation on
arguments, we will show that a dependence on sub-arguments can be
removed by reformulating the set of rules. Suppose that we have
two different arguments for an inconsistency where both arguments
have the same set of last rules.
\[ A_\perp = \{ \langle \{ \langle \{ \langle \emptyset, \alpha \rangle \},
\alpha \leadsto \varphi \rangle \}, \varphi \leadsto \eta \rangle,
\langle \{ \langle \{ \langle \emptyset, \beta \rangle \},
\beta \leadsto \psi \rangle \}, \psi \leadsto \neg \eta \rangle \} \]
\[ \left. \begin{array}{r} \alpha \vdash \alpha \leadsto \varphi
\vdash \varphi \leadsto \eta \\
\beta \vdash \beta \leadsto \psi \vdash \psi \leadsto \neg \eta
\end{array} \right | \hspace{-4pt} - \perp \]
and
\[ A'_\perp = \{ \langle \{ \langle \emptyset, \varphi \rangle \},
\varphi \leadsto \eta \rangle,
\langle \{ \langle \emptyset, \psi \rangle \},
\psi \leadsto \neg \eta \rangle \}. \]
\[ \left. \begin{array}{r} \varphi \vdash \varphi \leadsto \eta \\
\psi \vdash \psi \leadsto \neg \eta \end{array} \right |
\hspace{-4pt} - \perp \]
 Also suppose that $\varphi \leadsto \eta$
must be defeated given $A_\perp$ and $\psi \leadsto \neg \eta$
must be defeated given $A'_\perp$. In the former case, the
situation described by $\alpha$ and $\beta$ represents an
exception on the rule $\varphi \leadsto \eta$. We can, for
example, describe this exception by introducing the rule $\alpha
\wedge \beta \leadsto \neg \eta$ with preference $\alpha \wedge
\beta \leadsto \neg \eta \succ \varphi \leadsto \eta$, or the rule
$\alpha \wedge \beta \leadsto \neg (\varphi \leadsto \eta)$. Since
each of these rules defeats $\varphi \leadsto \eta$, $\varphi
\leadsto \eta$ can no longer defeat $\psi \leadsto \neg \eta$.
Hence, we only have to consider the last rules of an argument for
an inconsistency.

Another possibly problematic situation arises when a set of
arguments supporting a proposition, is stronger than each
individual argument. This is known as {\em accrual of reasons}.
Such a situation suggest that we need to consider preferences
between sets of rules. We can, however, handle such situations by
using a rule that combines the last rules of each argument for
that proposition. To illustrate this, suppose that we have the
following defeasible rules: $\alpha \leadsto \psi$, $\beta
\leadsto \psi$ and $\gamma \leadsto \neg \psi$. Let the last rule
be preferred to the first two rules. Then $\neg \psi$ must hold if
$\gamma$ and either $\alpha$ or $\beta$ hold. By introducing a
rule $\alpha \wedge \beta \leadsto \psi$ and by preferring it to
$\gamma \leadsto \neg \psi$, we can assure that $\psi$ holds
whenever $\alpha$, $\beta$ and $\gamma$ hold.

Another problem arises when a set of arguments for a proposition
weakens the support for the proposition. The approach presented
here offers no solution for such situations. Fortunately, a set of
arguments that weakens the support for a proposition, seems to be
counter-intuitive.

A last motivation for using a preference relation on rules, comes
from Prakken's (1993) investigation of legal argumentation. He
points out that in legal argumentation, meta rules, such as `lex
superior' and `lex posterior', are used to resolve the
inconsistency. These meta rules define a preference relation on
legal norms (the defeasible rules). When arguments disagree, the
meta rules are applied to the last rules of the disagreeing
arguments in order to determine the argument to be defeated.
Prakken illustrates this with legal examples. Also notice that
meta rules can also be subject to defeat in situations where they
specify incompatible relations between rules (Brewka 1994).

From the above discussion, we can draw the following conclusion.
\begin{quote}
Let $\langle \Sigma, D \rangle$ be a defeasible theory and let
$\succ$ be a partial preference relation on $D$. Furthermore, let
\[ A_\perp = \{ \langle A'_1, \eta_1 \leadsto \psi_1 \rangle ,\ldots,
\langle A'_k, \eta_k \leadsto \psi_k \rangle \langle \emptyset,
\sigma_{1} \rangle ,\ldots, \langle \emptyset, \sigma_{j} \rangle
\} \]
be an argument for an inconsistency. So,
\[ \begin{array}{l}
A_1 = \{ \langle A'_1, \eta_1 \leadsto \psi_1 \rangle \}
,...,A_k = \{ \langle A'_k, \eta_k \leadsto \psi_k \rangle \}, \\
A_{k+1} = \{ \langle \emptyset, \sigma_{1} \rangle \},\ldots,
A_{n} = \{ \langle \emptyset, \sigma_{j} \rangle \}
\end{array} \]
are disagreeing arguments.

Then, if $\eta_i \leadsto \psi_i$ is the least preferred last rule
in $\vec{A}_\perp$, $\eta_i \leadsto \psi_i$ must be defeated.
\end{quote}

Since we are using a preference relation on the set of defeasible
rules in order to resolve conflicts, we should extend the
definition of a defeasible theory $\langle \Sigma, D \rangle$ with
the preference relation $\succ$, i.e.\ $\langle \Sigma, D, \succ
\rangle$. Certainly, to describe legal argumentation, this
extension is necessary. If we restrict ourselves to one specific
preference relation, namely {\em specificity}, there is also no need to
extend the definition of a defeasible theory. The {\em
specificity} preference relation  can be derived from the set of
defeasible rules of a defeasible theory.\footnote{Since the set of
rules $D$ is usually considered as background knowledge, we can
determine the specificity preference relation in advance.}

Specificity is the principle by which rules applying to a more
specific situation override  those applying to more general ones.
The most specific situation to which a rule can be applied is the
situation in which only its antecedent is known to hold. In that
situation, its consequent must hold. The following preference
relation is based on the fact that the most specific situation to
which a rule can be applied is the situation in which only its
antecedent is known to hold.

\begin{definition} \label{spec}
Let $K \subseteq L$ be some general background knowledge, 
let $D$ be a set of defeasible rules, and let
$\varphi \leadsto \psi, \eta \leadsto \mu$ be two rules in $D$.

$\varphi \leadsto \psi$ is more specific than $\eta \leadsto \mu$
if and only if, given the premises $\{ \varphi \}$, there is an
argument $A_\eta$ for $\eta$ such that $\bar{A}_{\eta} \subseteq
\{ \varphi \} \cup K$.

$\varphi \leadsto \psi$ is strictly more specific than $\eta
\leadsto \mu$, $\varphi \leadsto \psi \succ_{\rm spec} \eta
\leadsto \mu$, if and only if $\varphi \leadsto \psi$ is more
specific than $\eta \leadsto \mu$ and $\eta \leadsto \mu$ in not
more specific than $\varphi \leadsto \psi$.
\end{definition}

\begin{example}
    Let $\varphi \leadsto \psi$, $\varphi \leadsto \eta$ and
    $\eta \leadsto \neg \psi$ be three defeasible rules.

    Given the premises $\{ \varphi \}$, we can derive the argument
    $A_\eta = \{ \langle \{ \langle \emptyset, \varphi \rangle \},
    \varphi \leadsto \eta \rangle \}$.
    Since $\bar{A}_\eta \subseteq \{ \varphi \} \cup K$,
    $\varphi \leadsto \psi$ is more specific than
    $\eta \leadsto \neg \psi$.
    Furthermore, since, given the premises $\{ \eta \}$, there is no
    argument for $\varphi$, $\varphi \leadsto \psi$ is strictly more
    specific than $\eta \leadsto \neg \psi$.
\end{example}

The above defined specificity preference relation corresponds with
definition of specificity implied by the axioms of conditional
logics (Geffner \& Pearl 1992). This definition of specificity is
relatively weak. Vreeswijk (1991) presents an example showing that
a slightly stronger definition can result in counter intuitive
conclusions.

\section{The belief set} \label{belief-set}
An inconsistency can be resolved considering the last rules of the
argument for the inconsistency. This implies that in case the
inconsistency is resolved, one of these last rules may no longer
justify the belief in its consequent;  i.e.\ the rule is defeated.
For this rule we can construct an argument supporting the {\em
undercutting} defeat of this rule.
\begin{definition} \label{defeat-arg}
Let $A_\perp$ be an argument for an inconsistency and let $\varphi
\leadsto \psi \in min_\succ(\vec{A}_\perp)$ be a least preferred
last rule for the inconsistency.

If $\langle  A_\varphi, \varphi \leadsto \psi \rangle \in
A_\perp$, then $A_{\neg (\varphi \leadsto \psi)} = (A_\perp \backslash \{
\langle A_\varphi, \varphi \leadsto \psi \rangle \} ) \cup
A_\varphi$ is an argument for the defeat of $\varphi \leadsto
\psi$.
\end{definition}

\begin{example}
Let
\[ A_\perp= \{ \langle \{ \langle \{ \langle \emptyset, \alpha \rangle
\}, \alpha \leadsto \varphi \rangle \}, \varphi \leadsto \psi
\rangle, \langle \{ \langle \emptyset, \eta \rangle \}, \eta
\leadsto \mu \rangle \} \]
\[ \left. \begin{array}{r}
\alpha \vdash \alpha \leadsto \varphi \vdash \varphi \leadsto \psi \\
\eta \vdash \eta \leadsto \mu
\end{array} \right | \hspace{-4pt} - \perp \]
If $\eta \leadsto \mu$ is preferred to $\varphi \leadsto \psi$, then
\[ A_{\neg (\varphi \leadsto \psi)} = \{ \langle \{ \langle \emptyset,
\alpha \rangle \}, \alpha \leadsto \varphi \rangle,
\langle \{ \langle \emptyset, \eta \rangle \},
\eta \leadsto \mu \rangle \} \]
\[ \left. \begin{array}{r}
\alpha \vdash \alpha \leadsto \varphi \\
\eta \vdash \eta \leadsto \mu
\end{array} \right | \hspace{-4pt} \circ \neg (\varphi \leadsto \psi) \]
We use the symbol $\mid  \hspace{-5pt} \circ$ to denote that $\neg
(\varphi \leadsto \psi)$ does not deductively follow from
$\varphi$ and $\mu$. $\neg (\varphi \leadsto \psi)$ ``follows''
from $\varphi \leadsto \psi$, $A_\varphi$, $A_\mu$ and $\succ$.
\end{example}

Given these arguments for the defeat of rules, we can define an
{\em extension}. Here an extension is a set of propositions for
which we have valid arguments. A valid argument is an argument of
which the rules are not defeated.
%\footnote{All argument systems
%agree on the fact that an argument is defeated if one of its
%sub-arguments is defeated. So, a rule of an argument is defeated
%if and only if it is the last rule of a disagreeing argument and no
%sub-argument is defeated.}. 
This also holds for
the arguments for the defeat of rules. A rule is defeated if the
argument for its defeat is valid; i.e.\ the argument does not
contain defeated rules.
\begin{definition} \label{defeat}
Let $\cal A$ be a set of all derivable arguments, let $\Gamma$ be
a set of defeasible rules and let
\[ \textsl{Defeat}(\Gamma) = \{ \alpha \leadsto \beta \mid
A_{\neg (\alpha \leadsto \beta)} \in {\cal A},
\tilde{A}_{\neg (\alpha \leadsto \beta)} \cap \Gamma = \emptyset \}. \]

Then the set of defeated rules $\Omega$ is defined as:
\[ \Omega = \textsl{Defeat}(\Omega). \]
\end{definition}
\begin{proposition} \label{unique}
    The set of defeated rules $\Omega$ are incomparable.
    I.e.\ for each $\Lambda \not= \Omega$ such that $\Lambda =
    \textsl{Defeat}(\Lambda)$, neither $\Lambda \subset \Omega$
    nor $\Lambda
    \supset \Omega$ holds.
\end{proposition}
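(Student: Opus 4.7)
The plan is to exploit the fact that the operator $\textsl{Defeat}$ is \emph{anti-monotonic} with respect to set inclusion, and then derive incomparability directly from this property combined with the fixed-point equation $\Omega = \textsl{Defeat}(\Omega)$.

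First, I would verify anti-monotonicity: if $\Gamma_1 \subseteq \Gamma_2$, then $\textsl{Defeat}(\Gamma_2) \subseteq \textsl{Defeat}(\Gamma_1)$. This is essentially immediate from Definition~\ref{defeat}. The set $\mathcal{A}$ of all derivable arguments does not depend on $\Gamma$; only the disjointness condition $\tilde{A}_{\neg(\alpha \leadsto \beta)} \cap \Gamma = \emptyset$ does. So for any $\alpha \leadsto \beta \in \textsl{Defeat}(\Gamma_2)$ witnessed by some $A_{\neg(\alpha \leadsto \beta)} \in \mathcal{A}$ with $\tilde{A}_{\neg(\alpha \leadsto \beta)} \cap \Gamma_2 = \emptyset$, the smaller set $\Gamma_1$ also satisfies $\tilde{A}_{\neg(\alpha \leadsto \beta)} \cap \Gamma_1 = \emptyset$, hence $\alpha \leadsto \beta \in \textsl{Defeat}(\Gamma_1)$.

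Next, I would argue by contradiction for both directions. Suppose $\Lambda = \textsl{Defeat}(\Lambda)$ with $\Lambda \subsetneq \Omega$. Anti-monotonicity applied to $\Lambda \subseteq \Omega$ gives $\textsl{Defeat}(\Omega) \subseteq \textsl{Defeat}(\Lambda)$, and then the two fixed-point equations yield $\Omega \subseteq \Lambda$, contradicting $\Lambda \subsetneq \Omega$. Symmetrically, if $\Lambda \supsetneq \Omega$, then $\Omega \subseteq \Lambda$ entails $\textsl{Defeat}(\Lambda) \subseteq \textsl{Defeat}(\Omega)$, hence $\Lambda \subseteq \Omega$, again a contradiction. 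Therefore no strict inclusion can hold in either direction between $\Omega$ and a distinct fixed point $\Lambda$.

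Honestly, there is no serious obstacle once anti-monotonicity is observed; the whole argument collapses to two applications of it. The only thing to be careful about is that $\mathcal{A}$ is the ambient set of \emph{all} derivable arguments, independent of $\Gamma$, so that the witnessing argument $A_{\neg(\alpha \leadsto \beta)}$ remains available as $\Gamma$ shrinks. Once this is noted, the proposition is a two-line corollary of the anti-monotone fixed-point structure — analogous to the standard fact that distinct fixed points of an antitone operator on a powerset lattice are always incomparable.
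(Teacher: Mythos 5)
Your proposal is correct and follows essentially the same route as the paper: the paper's own proof also derives the contradiction from the anti-monotonicity of \textsl{Defeat} (cited there simply as ``by the definition of \textsl{Defeat}''), applied once in each direction to the two fixed points. Your explicit verification that the witnessing argument in $\cal A$ survives when $\Gamma$ shrinks is the only detail the paper leaves implicit.
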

Proofs can be found in Appendix A.

Notice that the set of defeated rules need not be unique. Even if
every inconsistency has a unique least preferred last rule, the
set of defeated rules need not be unique. Consider for example the
facts $\alpha$ and $\beta$ and rules $\alpha \leadsto \gamma$,
$\beta \leadsto \delta$, $\gamma \leadsto \neg \delta$ and $\delta
\leadsto \neg \gamma$, where the last two rules are preferred to
the first two. Here there are two sets of defeated rules $\Omega$;
$\{ \alpha \leadsto \gamma \}$ and $\{ \beta \leadsto \delta \}$.

Given the sets of defeated rules, the extensions and the belief
set can be defined. An extension consists of all propositions for
which we have a valid argument. Following Pollock (1987), these
propositions are said to be {\em warranted}.\footnote{Some
proposals made in the literature do not consider multiple
extensions. Instead, they consider provisionally defeated
arguments. These are arguments that are valid in some extensions
but not in all extensions. For a discussion see (Vreeswijk 1997;
Prakken \& Vreeswijk 1999).}
\begin{definition} \label{argextend}
Let $\Omega$ be a set of defeated rules and let $\cal A$ be a set
of all derivable arguments.

Then an extension $\cal E$ is defined as:
\[ {\cal E} = \{ \varphi \mid A_\varphi \in {\cal A},
\tilde{A}_\varphi \cap \Omega = \emptyset \}. \]
\end{definition}
The belief set contains the propositions that hold in every
extension. This is the skeptical view in which the belief set
consists of every proposition for which we have a valid argument
in every extension.
\begin{definition}
Let $\langle \Sigma, D, \succ \rangle$ be a defeasible theory.
Furthermore, let ${\cal E}_1, ..., {\cal E}_n$ be the corresponding
extensions.

Then the belief set $B$ is defined as:
\[ B = \bigcap^n_{i=1} {\cal E}_i . \]
\end{definition}

It is possible to have a set of arguments for which no extension
exists. Such a situation can arise when the set of arguments
contain self-defeating arguments. In its most simple form,
self-defeat is related to one argument $\alpha \leadsto \beta \in
\tilde{A}_{\neg (\alpha \leadsto \beta)}$.\footnote{Prakken \&
Vreeswijk (1999) present an instance of the liar's paradox as an
example of self-defeat. With a different formulation of the
example, however, we can solve the paradox by using defeasible
rules without introducing self-defeat.} Since the set of
defeasible rules belong to the background knowledge, self-defeat
seems to present a defect in our knowledge. Hence, a revision of
the set of rules $D$ is needed. I.e.\ some rules must be removed or
reformulated. Though this is an important topic, it does not help
us much in practical situations. Hence, we need a way to draw
useful conclusions even if self-defeat occurs. Pollock (1994)
introduces partial status assignments to deal with the problem.
Here, we can do something similar. Firstly, we will reformulate
Definition \ref{defeat} in terms of a status assignment.
\begin{quote}
A status assignment is an assignment of {\em defeated} and {\em
undefeated} to rules in $D$ based on the following condition.

A rule $\varphi \leadsto \psi \in D$ is assigned {\em defeated} if
and only if there is an argument $A_{\neg (\varphi \leadsto
\psi)}$ such that every $\alpha \leadsto \beta \in \tilde{A}_{\neg
(\varphi \leadsto \psi)}$ is assigned {\em undefeated}. Otherwise,
the rule $\varphi \leadsto \psi \in D$ is assigned {\em
undefeated}
\end{quote}
$\Omega$ is the set of rules that are assigned the status {\em
defeated}.
\begin{proposition}
A set of rules $\Omega$ is a fixed point of \textsl{Defeat} if and
only if there is a status assignment such that $\Omega$ is the set
of rules that are assigned the status {\em defeated}.
\end{proposition}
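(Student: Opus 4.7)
The plan is to show the two conditions are just translations of each other, so I will prove the biconditional by verifying each direction by unpacking the definitions.

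For the forward direction, I will assume $\Omega = \textsl{Defeat}(\Omega)$ and define the status assignment that assigns \emph{defeated} to a rule $\alpha \leadsto \beta$ iff $\alpha \leadsto \beta \in \Omega$, and \emph{undefeated} otherwise. I then need to check that this assignment satisfies the condition stated in the reformulation, namely that $\varphi \leadsto \psi$ is \emph{defeated} iff there exists an argument $A_{\neg(\varphi \leadsto \psi)}$ all of whose rules are \emph{undefeated}. If $\varphi \leadsto \psi \in \Omega = \textsl{Defeat}(\Omega)$, the definition of \textsl{Defeat} directly supplies an argument $A_{\neg(\varphi \leadsto \psi)} \in {\cal A}$ with $\tilde{A}_{\neg(\varphi \leadsto \psi)} \cap \Omega = \emptyset$, so each of its rules is \emph{undefeated}. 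Conversely, if $\varphi \leadsto \psi \notin \Omega$, then $\varphi \leadsto \psi \notin \textsl{Defeat}(\Omega)$, meaning that for every candidate $A_{\neg(\varphi \leadsto \psi)}$ the intersection $\tilde{A}_{\neg(\varphi \leadsto \psi)} \cap \Omega$ is non-empty, i.e.\ some rule in every such argument is \emph{defeated}, so the defining condition for \emph{defeated} fails and \emph{undefeated} is the correct assignment.

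For the backward direction, I assume a status assignment is given and set $\Omega$ to be the set of rules marked \emph{defeated}. I then verify $\Omega = \textsl{Defeat}(\Omega)$ by double inclusion. If $\alpha \leadsto \beta \in \Omega$, the status condition furnishes an $A_{\neg(\alpha \leadsto \beta)}$ whose rules are all \emph{undefeated}, i.e.\ disjoint from $\Omega$, so $\alpha \leadsto \beta \in \textsl{Defeat}(\Omega)$. Conversely, if $\alpha \leadsto \beta \in \textsl{Defeat}(\Omega)$, there is an $A_{\neg(\alpha \leadsto \beta)}$ with $\tilde{A}_{\neg(\alpha \leadsto \beta)} \cap \Omega = \emptyset$, so all its rules are \emph{undefeated}, forcing $\alpha \leadsto \beta$ to be assigned \emph{defeated} by the status condition, hence $\alpha \leadsto \beta \in \Omega$.

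There is no real obstacle here: the status assignment is essentially a pointwise rewriting of the fixed-point equation $\Omega = \textsl{Defeat}(\Omega)$ into its two implications ``in $\Omega$ $\Rightarrow$ witness exists'' and ``witness exists $\Rightarrow$ in $\Omega$''. The only minor point to be careful about is that the status condition is stated as a biconditional (a rule is \emph{defeated} \emph{iff} a witnessing argument exists), so in both directions one must check both implications rather than only the one that is immediately visible; I would make these four verifications explicit in the write-up to keep the correspondence with $\textsl{Defeat}$ transparent.
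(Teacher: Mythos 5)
Your proposal is correct and follows essentially the same route as the paper's proof: both directions are established by unpacking the definition of \textsl{Defeat} and the status-assignment condition and matching them clause by clause (the paper phrases each direction as a proof by contradiction, while you verify the inclusions directly, but the underlying case analysis is identical). Your closing remark about checking all four implications of the two biconditionals is exactly the care the paper's proof also takes.
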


To deal with self-defeat, following Pollock (1994), we can use a
partial status assignment.
\begin{quote}
A {\em partial} status assignment is an assignment of {\em
defeated} and {\em undefeated} to a largest subset of the rules in
$D$ based on the following conditions.
\begin{itemize}
\item
A rule $\varphi \leadsto \psi \in D$ is assigned {\em defeated} if
and only if there is an argument $A_{\neg (\varphi \leadsto
\psi)}$ such that every $\alpha \leadsto \beta \in \tilde{A}_{\neg
(\varphi \leadsto \psi)}$ is assigned {\em undefeated}.
\item
A rule $\varphi \leadsto \psi \in D$ is assigned {\em undefeated}
if and only if for every argument $A_{\neg (\varphi \leadsto
\psi)}$ there is a rule $\alpha \leadsto \beta \in \tilde{A}_{\neg
(\varphi \leadsto \psi)}$ that is assigned {\em defeated}.
\end{itemize}
A rule $\varphi \leadsto \psi \in D$ that is not is assigned {\em
defeated} or {\em undefeated} are denoted as being {\em
undetermined}
\end{quote}
Since we should only consider conclusions based on arguments
containing {\em undefeated} rules, $\Omega$ is the set of rules
that are {\em not} assigned the status {\em undefeated}.

In the remainder of this paper, with exception of the next
section, we will assume that status assignments are complete.

\section{Determination of the fixed point of \textsl{Defeat}} \label{algor}
The determination of the fixed points of \textsl{Defeat} can be
viewed as a labeling problem of a JTMS (Doyle 1979). A JTMS
consists of nodes representing propositions, and of
justifications. A node is either labeled {\sc in} or {\sc out},
which corresponds with respectively `is believed' and `is not
believed'. A justification is a triple consisting of a set of {\em
in}-nodes, a set of {\em out}-nodes and a consequent node. The
consequent node is labeled {\sc in} if all {\em in}-nodes are
labeled {\sc in} and no {\em out}-node is labeled {\sc in}.

Such a JTMS must contain a node $N$ for every proposition of the
form $\neg (\alpha \leadsto \beta)$ for which we have an argument
in $\cal A$. Furthermore, for each node $N_{\neg (\alpha \leadsto
\beta)}$ representing $\neg (\alpha \leadsto \beta)$ and for each
argument in $\cal A$ supporting $\neg (\alpha \leadsto \beta)$,
the JTMS contains a justification $\langle \mbox{{\em in}-nodes},
\mbox{{\em out}-nodes}, N_{\neg (\alpha \leadsto \beta)} \rangle$.
Such a justification consists of an {\em empty set} of {\em
in}-nodes and a set of {\em out}-nodes. If $A$ is an argument for
$\neg (\alpha \leadsto \beta)$, then
\[ \langle \emptyset, \{ N_{\neg(\varphi \leadsto \psi)} \mid
\varphi \leadsto \psi \in \tilde{A} \}, N_{\neg (\alpha \leadsto \beta)}
\rangle \]
is a justification for $N_{\neg (\alpha \leadsto \beta)}$.

It is not difficult to verify that each valid labeling of the
nodes corresponds one to one with status assignment to the
corresponding rules. A rule is assigned the status {\em defeated}
if and only if the corresponding node of the JTMS is labeled {\sc
in}.

Much research has been done on algorithms for labeling nodes in a
JTMS network (Doyle 1979; Goodwin 1987; Reinfrank 1989; Witteveen
\& Brewka 1993). Some also deal with situations in which no
admissible labeling exists (Witteveen \& Brewka 1993). Partial
labeling has been proposed for these situations.

When no admissible labeling exists, the set of arguments $\cal A$
contains self-defeating arguments. In its most simple form,
self-defeat is related to one argument $\alpha \leadsto \beta \in
\tilde{A}_{\neg (\alpha \leadsto \beta)}$. In general, self-defeat
is represented by {\em odd loops} in the corresponding JTMS.

Odd loops in the network can be determined in linear time with
respect to $n \cdot d$ where $n$ is the number of nodes and  $d$
is the maximum number of outgoing justifications of any node.
After detecting an odd loop we can mark the nodes involved as
being `{\sc undetermined}', as well as the nodes that necessarily
depend on nodes in an odd loop. This labeling of some of the nodes
can subsequently be replaced by {\sc in} or {\sc out} if the
labeling of the remaining nodes enforces this. Hopefully, after
labeling all nodes, no {\sc undetermined} nodes remain. By doing
so, we handle odd loops in a pragmatic way.

Finding a labeling of a JTMS network is, in general, an
NP-Hard problem. Fortunately, for the above presented JTMS
networks without {\em odd loops}, we can find a labeling in {\em
linear time} with respect to $n \cdot d + j$ where $n$ is the
number of nodes, $d$ is the maximum number of outgoing
justifications of any node and $j$ is the total number of
justifications. An algorithm for finding a labeling will
be given in Appendix B.
Although a labeling can be found in linear time, the number of 
possible labelings can be exponential in number of minimal 
arguments for inconsistencies if no preference relation over the 
defeasible rules is specified; i.e. $\succ = \emptyset$.

\section{Properties}
\subsubsection{Default logic} \label{default-logic} In Section
\ref{conflicts}, we have seen that it suffices to consider only
the last rules of an argument for an inconsistency. This property
enables us to define a default logic that is equivalent with
respect to the belief set. This default logic will be based on
Brewka's prioritized default logic (Brewka 1994). Brewka argues
that it is sufficient to use only normal default rules in
combination with a preference relation on these rules. Semi-normal
and non-normal default rules are used to realize undercutting
defeat or to define preferences between default rules. Using
semi-normal and non-normal default rules for the encoding of
preferences is not very elegant. A more important problem is,
however, that we cannot specify preferences between default rules
that cause an inconsistency because of contingent information.
Prioritized default logic does not have this drawback. The
prioritized default logic proposed below is similar to Brewka's
prioritized default logic. We will, however, use the preference
relation in a different way as Brewka proposes.

Since we only consider normal default rules, it suffices to verify
whether a rule is applicable --{\em its antecedents hold}--, and
whether it is not defeated by other rules --{\em its consequent
holds}--. The consequences of a set of applicable rules, together
with the premises, may form an inconsistent set of propositions.
Since defeasible rules are viewed as normal default rules, one of
these rules must be defeated. The partial preference relations on
the rules will be used to determine the rule that must be
defeated. If an applicable rule is defeated, there must be a set
of non-defeated applicable rules that implies, together with the
premises, the negation of its consequent. Furthermore, the
defeated rule may not be preferred to any of rules that causes its
defeat.

\begin{definition} \label{extensional}
Let $\langle \Sigma, D, \succ \rangle$ be a defeasible theory.

Let $\Gamma (S)$ be a smallest set, with respect to the inclusion
relation ($\subseteq$), for which the following conditions hold:
\begin{enumerate}
\item
$\Sigma \subseteq \Gamma (S)$;
\item
$\Gamma (S) = Th(\Gamma (S))$;
\item
if there is a $\Delta \subset D$ that defeats $\varphi \leadsto
\psi$ with respect to $\succ$, then $\neg ( \varphi \leadsto \psi
) \in \Gamma (S)$;
\item
if $\varphi \in \Gamma (S)$, $\varphi \leadsto \psi \in D$ and $\neg (
\varphi \leadsto \psi ) \not\in S$,
then $\psi \in \Gamma (S)$.
\end{enumerate}
$\Delta$ defeats $\varphi \leadsto \psi$ with respect to $\succ$
if and only if
\begin{itemize}
    \item  $\varphi \in \Gamma (S)$,

    \item  $\Delta \subseteq \{ \eta \leadsto \mu \in D \mid
    \{ \eta, \mu \} \subseteq \Gamma (S) \}$,

    \item  $\{ \mu \mid \eta \leadsto \mu \in \Delta \} \cup \Sigma
    \vdash \neg \psi$ and

    \item  for no $\eta \leadsto \mu \in \Delta$ there holds:
    $\varphi \leadsto \psi \succ \eta \leadsto \mu$.
\end{itemize}

${\cal E}$ is an extension of the default theory if and only if
${\cal E} = \Gamma({\cal E})$
\end{definition}

\begin{theorem} \label{equivalence}
Let $\langle \Sigma, D, \succ \rangle$ be a defeasible theory.
The set of extensions determined by the argument system is
equal to the set of extensions determined by the default logic.
\end{theorem}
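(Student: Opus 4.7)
The plan is to establish the equivalence by matching fixed points $\Omega$ of \textsl{Defeat} in the argument system with sets $\cal E$ satisfying ${\cal E} = \Gamma({\cal E})$ in the default logic, via the maps $\Omega \mapsto \{\varphi \mid A_\varphi \in {\cal A},\ \tilde{A}_\varphi \cap \Omega = \emptyset\}$ (Definition \ref{argextend}) and ${\cal E} \mapsto \{\varphi \leadsto \psi \in D \mid \neg(\varphi \leadsto \psi) \in {\cal E}\}$. The conceptual core is that an undercutting argument $A_{\neg(\varphi \leadsto \psi)}$ of Definition \ref{defeat-arg} is essentially the same object as a set $\Delta$ witnessing $\Delta$-defeat of $\varphi \leadsto \psi$ in Definition \ref{extensional}: both package non-defeated sub-derivations that, combined with $\Sigma$, force $\neg\psi$ while $\varphi \leadsto \psi$ is a least preferred last rule.

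For the forward direction, given $\Omega = \textsl{Defeat}(\Omega)$, I would verify that the associated $\cal E$ satisfies the four clauses of Definition \ref{extensional} and is the smallest such set. Clause (1) is witnessed by the trivial arguments $\{\langle\emptyset, \sigma\rangle\}$ for $\sigma \in \Sigma$, whose rule sets are empty; clause (2) is immediate from the logical-closure clause of Definition \ref{argument}; clause (4) follows by extending any undefeated argument $A_\varphi$ for $\varphi$ with the rule $\varphi \leadsto \psi$ to obtain $\{\langle A_\varphi, \varphi \leadsto \psi\rangle\}$. The heart is clause (3): for each $\varphi \leadsto \psi \in \Omega$, the witnessing $A_{\neg(\varphi \leadsto \psi)}$ decomposes into an argument $A_\varphi$ together with the other last rules of the underlying inconsistency argument $A_\perp$, which collectively form the desired $\Delta$, and the preference condition follows from $\varphi \leadsto \psi \in \min_\succ(\vec{A}_\perp)$. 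Minimality of $\cal E$ as a $\Gamma$-fixed point then follows by induction on argument depth.

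For the reverse direction, given ${\cal E} = \Gamma({\cal E})$ and setting $\Omega = \{\varphi \leadsto \psi \in D \mid \neg(\varphi \leadsto \psi) \in {\cal E}\}$, I would show that $\Omega = \textsl{Defeat}(\Omega)$ and that the associated extension coincides with $\cal E$. Given a $\Delta$ witnessing default-logic defeat of $\varphi \leadsto \psi$, I would build arguments for each antecedent $\eta$ of rules in $\Delta$ (possible because $\eta \in \Gamma({\cal E}) = {\cal E}$), extend with the rules of $\Delta$ to obtain arguments for the consequents $\mu$, combine via the logical-closure clause of Definition \ref{argument} with $\Sigma$ to derive $\perp$, and then apply Definition \ref{defeat-arg} with $\varphi \leadsto \psi$ as the least preferred last rule. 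The main obstacle will be matching the symmetric, global preference check of Definition \ref{extensional}---that no $\eta \leadsto \mu \in \Delta$ strictly beats $\varphi \leadsto \psi$---with the local demand that $\varphi \leadsto \psi \in \min_\succ(\vec{A}_\perp)$ in the assembled argument: since $\succ$ is only partial, the latter set may contain alternative minimal elements, so one must argue that choosing $\varphi \leadsto \psi$ is consistent across the correspondence. A further subtlety is ensuring inductively that the defeasible rules appearing inside the constructed undercutting argument are themselves undefeated in $\cal E$, so that $\Omega$ is genuinely a fixed point of \textsl{Defeat} rather than only contained in one; this should be achievable by exploiting the smallest-set characterisation of $\Gamma({\cal E})$, which forces rules on intermediate layers to be non-defeated whenever the overall undercutting argument succeeds.
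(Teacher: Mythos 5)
Your proposal is correct and follows essentially the same route as the paper: the same correspondence between fixed points of \textsl{Defeat} and fixed points of $\Gamma$ via $\Omega \mapsto \{\varphi \mid \tilde{A}_\varphi \cap \Omega = \emptyset\}$ and ${\cal E} \mapsto \{\varphi \leadsto \psi \mid \neg(\varphi\leadsto\psi) \in {\cal E}\}$, the same clause-by-clause verification plus a smallest-sub-argument/minimality induction for one direction, and the same stratified construction of undefeated arguments (the paper's $\Gamma_i, {\cal E}_i$ layers and Lemmas A.3--A.4) for the other. The subtleties you flag --- matching the preference conditions and ensuring inner rules of the undercutting argument are undefeated --- are exactly the points the paper's proof handles (somewhat tersely) in its case analyses.
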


\begin{example}
    Let $\langle \Sigma, D, \succ \rangle$ be a defeasible theory where
    $\Sigma = \{ \alpha, \beta \}$,
    $D = \{ \alpha \leadsto \delta, \beta \leadsto \neg \delta \}$ and
    $\succ = \{ (\beta \leadsto \neg \delta, \alpha \leadsto \delta) \}$.

    Then we can construct the following arguments.
    \[ \begin{array}{l}
    A_{\delta} = \{ \langle \{ \langle \emptyset, \alpha \rangle \},
    \alpha \leadsto \delta \rangle \};  \\
    A_{\neg \delta} = \{ \langle
    \{ \langle \emptyset, \beta \rangle \},
    \beta \leadsto \neg \delta \rangle \};  \\
    A_{\neg (\alpha \leadsto \delta)} = \{ \langle
    \{ \langle \emptyset, \beta \rangle \},
    \beta \leadsto \neg \delta \rangle,
    \langle \emptyset, \alpha \rangle \}
    \end{array} \]
    This set of arguments result in one fixed point,
    $\Omega = \{ \alpha \leadsto \delta \}$ and
    for the function \textsl{Defeat}.
    So, we have an extension
    \[ {\cal E} = Th(\{ \alpha, \beta, \neg \gamma, \neg \delta,
    \neg (\alpha \leadsto \delta) \}) . \]

    According to the default logic given in this section,
    an extension must at least contain the premises
    $\Sigma = \{ \alpha, \beta \}$.

    Suppose now that we cannot defeat $\alpha \leadsto \delta$.
    Then $\delta$ must belong to the extension.
    Furthermore, since
    $\beta \leadsto \neg \delta \succ \alpha \leadsto \delta$,
    $\beta \leadsto \neg \delta$ will not be defeated either.
    Therefore, $\neg \delta$ will belong to the extension.
    But then $\alpha \leadsto \delta$ will be defeated.
    Contradiction.

    Hence, $\alpha \leadsto \delta$ must be defeated.
    Since we cannot defeat $\beta \leadsto \neg \delta$, $\neg \delta$
    will belong to the extension.
    Therefore we can derive $\neg (\alpha \leadsto \delta)$.

    So, we have one extension
    \[ {\cal E}' = Th(\{ \alpha, \beta, \neg \gamma, \neg \delta,
    \neg (\alpha \leadsto \delta) \}) . \]
\end{example}

We can establish a relation between this new prioritized default
logic and Reiter's default logic. Firstly, we can translate
defeasible rules to default rules. Since we must be able to denote
that the application of a default rule is no longer valid, $\neg
(\alpha \leadsto \beta)$, we will associate a name with each
default rule. This name will be used to denote that the rule may
no longer be applied. So if $n_{\alpha \leadsto \beta}$ is the
name of the translation of $\alpha \leadsto \beta$, then $\neg
n_{\alpha \leadsto \beta}$ will be the translation of $\neg
(\alpha \leadsto \beta)$. To ensure that a default rule named
$n_{\alpha \leadsto \beta}$ will not be applied if $\neg n_{\alpha
\leadsto \beta}$ holds, we must use the name of the default rule
as one of the justifications of this default rule. Hence, we
translate a defeasible rule $\alpha \leadsto \beta $ to the
default rule \[ \frac{\alpha : \beta, n_{\alpha \leadsto \beta}}
{\beta}. \] The translations of the defeasible rules are all
semi-normal default rules.

It is not difficult to verify that every extension according to
Definition \ref{extensional} is also a Reiter-extension. Since the
preference relation on the defeasible rules is not taken into
account, some Reiter-extensions need not be extensions according
to Definition \ref{extensional}. To eliminate these extensions, we
must encode the preference relation using default rules. To do
this properly, we must also change the translation of $\alpha
\leadsto \beta$.
\begin{quote}
For every rule $\alpha \leadsto \beta \in D$, introduce a
non-normal default rule:
\[ \frac{\alpha : n_{\alpha \leadsto \beta}}{\beta}. \]

For every set of rules $\{ \eta_1 \leadsto \mu_1,...,\eta_k
\leadsto \mu_k \} \subseteq D$ and for every $\varphi \leadsto
\psi \in D$ such that $\Sigma \cup \{ \mu_1,..., \mu_k \} \vdash
\neg \psi$ and for no $\eta_i \leadsto \mu_i$ there holds: $\varphi \leadsto \psi \succ \eta_i \leadsto \mu_i$, introduce a default rule:
 \[ \frac{\eta_1 \wedge ... \wedge \eta_k : n_{\eta_1 \leadsto \mu_1}
 ,...,n_{\eta_k \leadsto \mu_k}, \neg n_{\varphi \leadsto \psi}}
 {\neg n_{\varphi \leadsto \psi}}. \]
\end{quote}
A disadvantage of this translation is that it depends on the
premises $\Sigma$.

We can also translate default rules to defeasible rules, with the
exception of non-normal default rules. Consider a normal or
semi-normal default rule of the form:
 \[ \displaystyle \frac{\alpha : \beta_1,..., \beta_k, \gamma}{\gamma}. \]
We can translate this default rule to the following defeasible
rules:
 \[ \alpha \leadsto \gamma, \neg \beta_1 \leadsto \neg(\alpha
\leadsto \gamma),..., \neg \beta_k \leadsto \neg(\alpha \leadsto
\gamma). \]

\subsubsection{Specificity}
Poole (1985) gives a semantic definition of specificity based on
the comparison of arguments (theories). His definition does not
use the last rules of an argument as a starting point. Instead,
Poole compares sets of rules. A Poole-argument $\langle D, \alpha
\rangle$ for a proposition $\alpha$ describes a set of rules $D$
needed to derive $\alpha$;  $F_c \cup D \cup F_n \models \alpha$.
Here, the defeasible rules $D$ are represented by implications.
Furthermore, $F_c$ and $F_n$ denote the contingent
and the necessary facts respectively.

According to Poole (1985), $\langle D_1, \psi \rangle$ is more
specific than $\langle D_2, \mu \rangle$, if for every set of possible 
fact $F_p$:
\begin{quote}
if $F_p \cup D_1 \cup F_n \models \psi$ and $F_p \cup D_2 \cup F_n 
\not\models \psi$, then $F_p \cup D_2 \cup F_n \models \mu$.
\end{quote}

In this paper, another definition has been given. This definition
can be related to Poole's definition of specificity.

\begin{theorem}
Let $\varphi \leadsto \psi$ and $\eta \leadsto \mu$ be two rules.

If $\varphi \leadsto \psi$ is more specific than
$\eta \leadsto \mu$ according to Definition \ref{spec}, then there are
two Poole-arguments $\langle D_1, \psi \rangle$ and $\langle D_2, \mu
\rangle$ with $\varphi \leadsto \psi \in D_1$ and $\eta \leadsto \mu \in
D_2$ for which there hold that $\langle D_1, \psi \rangle$ is more
specific than $\langle D_2, \mu \rangle$.
\end{theorem}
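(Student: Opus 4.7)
The plan is to translate the witness produced by our definition of specificity into Poole-arguments, and then discharge Poole's quantified condition by a case split on whether the antecedent $\varphi$ is derivable from the possible facts.

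First I would unpack Definition~\ref{spec}: assuming $\varphi \leadsto \psi$ is more specific than $\eta \leadsto \mu$, there is an argument $A_\eta$ for $\eta$ from the premises $\{\varphi\}$ whose underlying premise set satisfies $\bar{A}_\eta \subseteq \{\varphi\} \cup K$. Let $D' = \tilde{A}_\eta$ be the set of defeasible rules appearing in $A_\eta$. I would then set $D_1 = \{\varphi \leadsto \psi\}$ and $D_2 = D' \cup \{\eta \leadsto \mu\}$, treating each defeasible rule as its material-implication counterpart in Poole's framework, and fix the necessary facts as $F_n = K$. With the contingent facts $F_c = \{\varphi\}$, one sees immediately that $F_c \cup D_1 \cup F_n \models \psi$ (via $\varphi \to \psi$) and $F_c \cup D_2 \cup F_n \models \mu$ (via the rules of $D'$ we derive $\eta$ from $\varphi$ together with $K$, and then $\mu$ via $\eta \to \mu$), so $\langle D_1, \psi\rangle$ and $\langle D_2, \mu\rangle$ are genuine Poole-arguments of the required form.

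Next I would verify Poole's comparative condition. Fix an arbitrary possible-fact set $F_p$ and suppose $F_p \cup D_1 \cup F_n \models \psi$ while $F_p \cup D_2 \cup F_n \not\models \psi$. Since $D_1$ consists only of $\varphi \to \psi$, the entailment of $\psi$ must proceed either directly from $F_p \cup F_n$ or through use of $\varphi$. The purely-direct case is impossible: if $F_p \cup F_n \models \psi$, then adding $D_2$ preserves this entailment, contradicting the second hypothesis. Hence $F_p \cup F_n \models \varphi$. Because $A_\eta$ uses only premises from $\{\varphi\} \cup K \subseteq F_p \cup F_n$ and only the rules in $D' \subseteq D_2$, the same derivation yields $F_p \cup D_2 \cup F_n \models \eta$, and then $\eta \to \mu \in D_2$ gives $F_p \cup D_2 \cup F_n \models \mu$, which is exactly Poole's conclusion.

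The main obstacle I anticipate is the bookkeeping around classical entailment versus argument-style derivation: one has to be careful that whenever a Poole-style material-implication chain gives $\psi$ from $F_p \cup D_1 \cup F_n$, we can legitimately extract $F_p \cup F_n \models \varphi$ rather than some weaker disjunctive consequence. Writing out the argument $A_\eta$ explicitly as a sequence of applications of rules in $D'$ to premises in $\{\varphi\} \cup K$, and matching each application to a modus-ponens step inside Poole's classical entailment, is what makes the transfer go through; everything else is routine.
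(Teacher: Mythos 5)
Your proof takes essentially the same route as the paper's: the same choice of $D_1 = \{\varphi \leadsto \psi\}$ and $D_2 = \tilde{A}_\eta \cup \{\eta \leadsto \mu\}$, the same reduction to showing that any $F_p$ satisfying the antecedent of Poole's condition must entail $\varphi$, and the same transfer of the argument $A_\eta$ to conclude $F_p \cup D_2 \cup F_n \models \mu$. The delicate step you flag at the end --- extracting $F_p \cup F_n \models \varphi$ rather than only the weaker $F_p \cup F_n \models \varphi \vee \psi$ --- is asserted just as baldly in the paper's own proof (``any possible fact $F_p$ for which the antecedent of Poole's definition holds, must imply $\varphi$''), and your explicit identification $F_n = K$ actually handles the case $\bar{A}_\eta \subseteq \{\varphi\} \cup K$ slightly more completely than the paper, which only considers $\bar{A}_\eta = \emptyset$ or $\bar{A}_\eta = \{\varphi\}$.
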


The converse of this theorem need not hold. The reason why the
converse need not hold is because a set of rules ($D_1$ or $D_2$)
need not uniquely determine a single argument. Furthermore,
$D_1 = \{ \varphi \leadsto \psi \}$ is more specific than $D_2 = 
\{ \eta \leadsto \mu \}$ according to Poole's definition 
while it is only more specific according to the definition given in this 
paper if there exists an argument $A_\eta$ such that 
$\tilde{A}_\eta \subseteq \{\varphi\}$.
%Poole's definition of specificity is not violated if for some set
%of possible facts $F_p$: $F_p \cup D_1 \cup F_n \models \psi$ and
%$F_p \cup D_2 \cup F_n \models \psi$, but $F_p \cup D_2 \cup F_n
%\not\models \mu$.

\subsubsection{Closure properties}
Gabbay (1985) has initiated the study of the closure properties of
the non-monotonic derivability relation: `$\nm$' (Gabbay 1985;
Kraus et al. 1990; Makinson 1988). Here, the non-monotonic
derivability relation is defined as:
\begin{list}{}{} \item
$\Sigma \nm \varphi$ if and only if $B$ is the belief set of
$\langle \Sigma, D, \succ \rangle$ and $\varphi \in B$.
\end{list}

Gabbay (1985) argues that there are three axioms that must be
satisfied by all non-monotonic logics.
\begin{description}
    \item[{\it Reflexivity}] \ \\
    if $\varphi \in \Sigma$, then $\Sigma \nm \varphi$;
    \item[{\it Cut}] \ \\
    if $\Sigma \nm \varphi$ and $\Sigma \cup \{\varphi\}
    \nm \psi$, then $\Sigma \nm \psi$;
    \item[{\it Cautious Monotonicity}] \ \\
    if $\Sigma \nm \varphi$ and $\Sigma \nm \psi$,
then $\Sigma \cup \{\varphi\} \nm \psi$;
\end{description}
These axioms characterize the property called {\em cumulativity}.

We wish, of course, that all logical consequences of the set of
premises are also derivable.
\begin{description}
    \item[{\it Deduction}] \ \\
    if $\Sigma \vdash \varphi$, then $\Sigma \nm \varphi$;
\end{description}
This axiom implies {\em Reflexivity}, it implies  together with
{\em Cut} the axiom {\em Right Weakening}, and it implies together
with {\em Cautious Monotonicity} and {\em Cut} the axiom {\em Left
Logical Equivalence}. The latter two axioms have been proposed by
Kraus, Lehmann and Magidor (1990). They also proposed an axiom
characterizing reasoning by cases.
\begin{description}
    \item[{\it Or}] \ \\
    if $\Sigma \cup \{\varphi\} \nm \eta$ and $\Sigma \cup \{\psi\}
\nm \eta$, then $\Sigma \cup \{\varphi \vee \psi\} \nm \eta$;
\end{description}

Non-monotonic logics satisfying {\em Deduction}, {\em Cautious
Monotonicity}, {\em Cut} and {\em Or} are said to belong to system
{\bf P}.

\begin{theorem} \label{closure-prop}
The defeasible theory $\langle \Sigma, D, \succ \rangle$
satisfies: \\ {\em Reflexivity, Deduction, Cut} and, in the absence of odd loops, {\em Cautious Monotony}.

\noindent
An \emph{odd loop} is an odd number of arguments $A_1,\ldots,A_n$ where every $A_{i+1}$ defeats a rule in $\tilde{A}_i$, and $A_1$ defeats a rule in $\tilde{A}_n$.
\end{theorem}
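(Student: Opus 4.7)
The plan is to handle the four properties in order of increasing difficulty, with a single argument-substitution lemma doing the real work for the last two. Reflexivity is immediate from the first clause of Definition \ref{argument}: for $\varphi \in \Sigma$ the trivial argument $A = \{\langle \emptyset, \varphi \rangle\}$ has $\tilde{A} = \emptyset$, so $\tilde{A} \cap \Omega = \emptyset$ for every fixed point $\Omega$ of $\textsl{Defeat}$; hence $\varphi$ belongs to every extension and thus to the belief set. Deduction follows by applying the second clause of the definition to combine such trivial arguments: whenever $\Sigma \vdash \varphi$ there is an argument for $\varphi$ with empty rule set, which is likewise undefeatable in every extension.

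The engine for Cut and Cautious Monotony is a \emph{substitution lemma}: given a valid argument $A_\varphi$ for $\varphi$ grounded in $\Sigma$ and any argument $B$ grounded in $\Sigma \cup \{\varphi\}$, one can replace every sub-structure $\langle \emptyset, \varphi \rangle$ appearing inside $B$ by the structures constituting $A_\varphi$, producing an argument $C$ grounded in $\Sigma$ with $\tilde{C} = \tilde{B} \cup \tilde{A}_\varphi$ and $\hat{C} \supseteq \hat{B} \setminus \{\varphi\}$. Conversely, every argument grounded in $\Sigma$ is already an argument grounded in $\Sigma \cup \{\varphi\}$. The lemma is proved by induction on the nesting depth of $B$ following Definition \ref{argument}. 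From it, when $\Sigma \nm \varphi$, I would establish a correspondence between the $\textsl{Defeat}$-fixed-points of the two theories. The easy direction: a fixed point $\Omega$ for $\Sigma$ whose extension contains $\varphi$ is also a fixed point for $\Sigma \cup \{\varphi\}$, because any defeat-argument $B$ from $\Sigma \cup \{\varphi\}$ expands via $A_\varphi$ to a defeat-argument $C$ from $\Sigma$ with $\tilde{C} \cap \Omega = \emptyset$. This direction delivers Cut: every valid argument for $\psi$ in every extension of $\Sigma \cup \{\varphi\}$ yields one in every extension of $\Sigma$.

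The main obstacle, and the point the revised version repairs, is the reverse direction needed for Cautious Monotony: showing that a fixed point $\Omega'$ for $\Sigma \cup \{\varphi\}$ is a fixed point for $\Sigma$. The trouble is that in $\Sigma \cup \{\varphi\}$ the premise $\varphi$ admits the trivial argument $\{\langle \emptyset, \varphi \rangle\}$, which may support a defeat-argument $B$ for some rule $\alpha \leadsto \beta \in \tilde{A}_\varphi$; expanding $B$ back to $\Sigma$ via $A_\varphi$ produces a $C$ whose rule set $\tilde{B} \cup \tilde{A}_\varphi$ contains the very rule $\alpha \leadsto \beta$ being defeated, i.e.\ a self-defeating configuration. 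This is precisely the odd-loop phenomenon of Section \ref{algor} and the discussion following Definition \ref{defeat}. I would therefore cast the correspondence between $\textsl{Defeat}$-fixed-points as a correspondence between labellings of the associated JTMS networks, and invoke the odd-loop-free hypothesis to conclude that any complete status assignment for $\Sigma \cup \{\varphi\}$ extends to one for $\Sigma$ keeping the rules of $\tilde{A}_\varphi$ labelled \emph{undefeated}. Then every valid argument for $\psi$ from $\Sigma$ is, a fortiori, a valid argument from $\Sigma \cup \{\varphi\}$ in the matching extension, yielding Cautious Monotony.
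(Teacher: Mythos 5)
Your overall route coincides with the paper's: trivial arguments give Reflexivity and Deduction; Cut follows by showing that an extension of $\langle \Sigma, D, \succ \rangle$ survives the addition of $\varphi$, via substituting a valid $A_\varphi$ for every occurrence of $\{\langle \emptyset, \varphi \rangle\}$; and Cautious Monotony needs the converse containment, which can only fail when the defeat of some rule of $\tilde{A}_\varphi$ depends, through a chain of defeats, on $\varphi$ itself --- which after substitution is exactly an odd loop. You have correctly located the crux that the odd-loop hypothesis is there to repair, and your JTMS-labelling framing is only a cosmetic variant of the paper's direct reasoning on fixed points of \textsl{Defeat}.

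The one genuine gap is in the load-bearing claim of your substitution lemma, namely that ``any defeat-argument $B$ from $\Sigma \cup \{\varphi\}$ expands via $A_\varphi$ to a defeat-argument $C$ from $\Sigma$.'' Defeat-arguments are not arbitrary arguments: by Definition \ref{defeat-arg}, $A_{\neg(\alpha \leadsto \beta)}$ is obtained from an argument $A_\perp$ for an inconsistency by unfolding a rule that is a \emph{least preferred last rule} of $A_\perp$. When $\{\langle \emptyset, \varphi \rangle\}$ is itself one of the disagreeing arguments in $A_\perp$, replacing it by $A_\varphi$ changes the set of last rules: $\vec{A}^*_\perp$ now also contains $\vec{A}_\varphi$, and one must check that $\alpha \leadsto \beta$ still lies in $min_\succ(\vec{A}^*_\perp)$ --- otherwise $A^*_{\neg(\alpha \leadsto \beta)}$ is not a legitimate defeat-argument and the correspondence between fixed points breaks. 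Your lemma's conclusion $\tilde{C} = \tilde{B} \cup \tilde{A}_\varphi$ says nothing about this; it is the only point at which the preference relation enters, and the paper's proof of Cut devotes its third case precisely to arguing that the validity of $A_\varphi$ in the extension rules out any $\eta \leadsto \mu \in \vec{A}_\varphi$ with $\alpha \leadsto \beta \succ \eta \leadsto \mu$. You need the analogous check before you can ``invoke'' the odd-loop-free hypothesis in the Cautious Monotony direction. The remaining looseness --- stating the conclusion of the labelling correspondence rather than deriving it --- is at about the same level of informality as the paper's own argument, so I would not count it against you.
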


A defeasible theory does not satisfy the closure property {\em
Or}, and therefore does not allow for {\em reasoning by cases}.

\section{Reasoning by cases} \label{RbC}
To enable reasoning by cases in an argument system, the usual
approach is to use indirect argumentation. Indirect argumentation
involves subsidiary arguments that justify a conclusion with
respect to the premises and some assumptions. If we have an
argument for $\varphi$ under the assumption $\alpha$, an argument
for $\varphi$ under the assumption $\beta$ and an argument for
$\alpha \vee \beta$, then we can construct an argument for
$\varphi$ without the assumptions $\alpha$ and $\beta$ using
reasoning by cases. Most argument systems, however, do not allow
for subsidiary argumentation and therefore are not able to reason by
cases. This also holds for the argument system proposed in the
previous sections.

We can of course extend the argument system by (\textit{i}) allowing for
subsidiary arguments and (\textit{ii}) introducing a rule for combining
arguments through reasoning by cases. If arguments were not
defeasible, such a simple extension would suffice. Unfortunately,
arguments are defeasible, so we must also address the
defeasibility of an argument when reasoning by cases. The question
that we have to address is whether an argument can be defeated by
a subsidiary argument when reasoning by cases. If an argument is
defeated by a subsidiary argument in every case described by a
disjunction, then the answer is clearly Yes. Roos (1997a, 1998) has
argued that the answer must also be Yes when an argument is
defeated by only one subsidiary argument corresponding with a case
of a disjunction. He illustrates the necessity for this with the
following example.

Suppose that we have the following rules:
\begin{itemize}
    \item  A person who injures another person must be punished.

    \item  A person who injures another person in self-defense,
    should not be punished.
    \item  A person who is dragged into a fight against his/her will,
    is acting in self-defense.
\end{itemize}
Now suppose that John has injured Peter and that a reliable
witness testifies that either John or Paul has been dragged into
the fight against his will. If the argument for not punishing John
in case he acted in self-defense, would not defeat the argument
for punishing John, we will conclude that John must be punished.
This would be most unfortunate for John if he was dragged into the
fight against his will.

When reasoning by cases, we should be able to apply defeasible rules in a case. The above example suggests that we should also resolve conflicts within the context of a case. 
There is one issue with resolving conflicts within the context of a case, as is illustrated by the following example. 
\begin{quote}
John normally attends a party when he is invited: $ji \leadsto
jp$. Bob and John never attend the same party: $\neg (jp \wedge
bp)$. John is invited to a party: $ji$. 
\end{quote}
The proposition $\neg (jp \wedge bp)$ implies two cases, one in which 
John does not attend the party, and one in which Bob does not. Clearly the 
former case conflicts with the conclusion of the rule $ji \leadsto
jp$. Since facts defeat defeasible rules, we would conclude that John will 
not attend the party, in the former case. This conclusion is not valid 
because the conclusion of the rule $ji \leadsto jp$ is consistent with 
the other case described by the proposition $\neg (jp \wedge bp)$.

To address the above described problem, we will use the following principles for reasoning by cases:
\begin{itemize}
\item
Conclusions drawn in a case may not change when other cases are eliminated  
because of additional information. Of course the overall conclusions may change
because they depend on all cases.
\item
Conflicts must be evaluated using the initial information and the conclusions of applied defeasible rules. The defeasible rules may be applied in a case implied by one or more propositions.
\end{itemize}

A possible way to enable reasoning by case proposed in (Roos 1997a) is by introducing special defeasible rules, called \emph{hypotheses}, that generate cases. To avoid that we consider cases $\alpha$ and $\beta$ that follow from the disjunction $\alpha \vee \beta$, simultaneously, we ensure that the cases are mutually exclusive. Considering cases $\alpha$ and $\beta$ simultaneously corresponds to the case $\alpha \wedge \beta$, which is only one of the three possibilities implied by the disjunction $\alpha \vee \beta$.
The following set of {\em hypotheses} can be used to introduce the
mutually exclusive cases.
\[ %\begin{array}{ll}
H = \{  \alpha \vee \beta \leadsto \alpha \wedge \neg \beta,
\alpha \vee \beta \leadsto \alpha \wedge \beta,
  \alpha \vee \beta \leadsto \neg \alpha \wedge \beta \mid
\alpha \vee \beta \in L \; \} %\end{array}
\]

For every proposition $\psi$ that is unknown with respect to the
partial models, we can derive $\psi \vee \neg \psi$ and $\varphi
\vee \psi$ where $\varphi$ is a known proposition. These
disjunctions should not be considered for reasoning by cases. If
we would, we could defeat the rule `birds fly' using the
disjunction $penguins \vee \neg penguins$ and the rule `penguins
do not fly'. Clearly, we do not want this.

The reason why we should not consider these disjunctions is because
of the difference between {\em unknown} and {\em uncertain}.
Uncertainty is expressed by multiple cases, while unknown
is expressed by a single case of which we do not (yet) know the 
truth-value some atomic propositions. Some of the rules may fill in 
the yet unknown information.

We may apply a disjunction $\alpha \vee \beta$ for reasoning by
cases if $\alpha \vee \beta$ is not a derived tautology and if
$\alpha \vee \beta$ has not been derived from $\alpha$ or $\beta$.
A characteristic of these requirements is that $\alpha \vee \beta$
may not contain more atomic propositions than the proposition from
which it is derived. This requirement blocks the possibility of
introducing irrelevant cases. Furthermore, the only tautologies
that are allowed according to this requirement, cannot do any harm
as we will see below. Hence, we can formulate the following
modified definition of an argument.

\begin{definition} \label{argument2}
{\em Definition \ref{argument} revised.} Let $\langle \Sigma, D,
\succ \rangle$ be a defeasible theory where $\Sigma$ is the set of
premises and $D$ is the set of rules.

Then an argument $A$ for a proposition $\psi$ is recursively
defined in the following way:
\begin{itemize}
\item
For each $\psi \in \Sigma$: $A = \{ \langle \emptyset, \psi
\rangle\}$ is an argument for $\psi$.
\item
Let $A_1, ..., A_n $ be arguments for respectively
$\varphi_1,...,\varphi_n$. If $\varphi_1,...,\varphi_n \vdash
\psi$, then $A = A_1 \cup ... \cup A_n$ is an argument for $\psi$.
\item
For each $\varphi \leadsto \psi \in D$ if $A'$ is an argument for
$\varphi$, then $A = \{ \langle  A', \varphi \leadsto \psi \rangle
\}$ is an argument for $\psi$.
\item
For each $\varphi \leadsto \psi \in H$ if $A'$ is an argument such
that $\hat{A}' \vdash \varphi$ and $At(\varphi) \subseteq
At(\hat{A}')$, then $A = \{ \langle  A', \varphi \leadsto \psi
\rangle \}$ is an argument for $\psi$.
\end{itemize}
The function $At(\cdot)$ denotes the set of atomic propositions used
in a propositions or a set of propositions.
%\\
%Let $A = \{ \langle  A'_1, \alpha_1 \rangle ,..., \langle A'_n,
%\alpha_n, \rangle \}$. Then $\hat{A} = \{ c(\alpha_1) ,...,
%c(\alpha_n) \}$ where $c(\alpha) = \alpha$ if $\alpha\in L$, and 
%$c(\alpha \leadsto \beta) = \beta$.
\end{definition}

If we have, for example, and argument for $\alpha \wedge \beta$,
we can derive an argument for $\alpha \vee \neg \alpha$ and for
$\alpha \vee \neg \beta$. It is, however, invalid to apply $\alpha
\vee \neg \alpha$ and $\alpha \vee \neg \beta$ for reasoning by
cases. Fortunately, since any argument for the cases $\neg \alpha$
and $\neg \beta$ will be inconsistent with the argument for
$\alpha \wedge \beta$, we can resolve the problem using
preferences. By preferring any defeasible rule in $D$ to any
hypothesis in $H$, we guarantee that a case described by a
disjunction will not be considered if another case described by
the disjunction is derivable.

\begin{definition}
Let $D$ be a set of defeasible rules and let $H$ be a set of
hypotheses.

For each $\alpha \leadsto \beta \in D$ and for each $\gamma
\leadsto \delta \in (H \backslash D)$: $\alpha \leadsto \beta \succ \gamma
\leadsto \delta$.
\end{definition}

The above definitions allows us to construct an argument for a
case described by a disjunction. The preference relation ensures
that we can apply reasoning by cases if no constituent of a
disjunction is derivable. Furthermore, since the cases introduced
by the hypotheses are mutually exclusive, each case will be
represented by a separate extension. So, disjunctions can be
viewed as describing {\em possible extensions}.

Viewing a disjunction as describing possible extensions is an
important deviation from the `normal' interpretation of a
disjunction. In argument systems multiple extensions arise because
there is no preference between two or more conflicting arguments;
e.g. the Nixon diamond. This can be interpreted as a disjunction
stating that one of the arguments is valid. For each case
described by this disjunction, we create an extension describing
that case.
For real disjunctions we can do the same. We can introduce an
extension for each case described by a disjunction. Above we have
realized this by using hypotheses. These hypotheses create an
extension for each case described by a disjunction.

To illustrate reasoning by cases using defeasible rules, we will
apply the above presented results to the example of John who might
be dragged into a fight.
\[ \begin{array}{l} 
\it John\_injures\_Peter \\ 
\it John\_dragged\_into\_fight \vee Paul\_dragged\_into\_fight \\ 
\it \neg (John\_dragged\_into\_fight \wedge Paul\_dragged\_into\_fight) \\ 
\it John\_dragged\_into\_fight \leadsto self\_defense\_John \\
\it John\_injures\_Peter \leadsto John\_must\_be\_punished \\ 
\it self\_defense\_John \leadsto \neg John\_must\_be\_punished \end{array}
 \]
\begin{small}
 $\it (self\_defense\_John \leadsto \neg
John\_must\_be\_punished) \succ (John\_injures\_Peter \leadsto
John\_must\_be\_punished) $
\end{small}

Using these facts and rules, we can construct arguments. Two of
these argument are:
 \begin{tabbing} mmm\=mmm\=mmm\=mmm\=mmm\=mmm\=mmm\= \kill
 \> $\it A_{John\_must\_be\_punished} =$ \\
 \> \> $\it John\_injures\_Peter \vdash$ \\
 \> \> \> $\it John\_injures\_Peter \leadsto John\_must\_be\_punished \vdash$ \\
 \> \> \> \> $\it John\_must\_be\_punished$ \\ 
 \\
 \> $\it A_{\neg John\_must\_be\_punished} =$ \\
 \> \> $\it John\_dragged\_into\_fight \vee
  Paul\_dragged\_into\_fight \vdash$ \\
 \> \> \> $\it John\_dragged\_into\_fight \vee Paul\_dragged\_into\_fight
  \leadsto$ \\
 \> \> \> \> $\it John\_dragged\_into\_fight \wedge \neg
 Paul\_dragged\_into\_fight \vdash$ \\
 \> \> \> \> \> $\it John\_dragged\_into\_fight \leadsto 
 self\_defense\_John \vdash$ \\
 \> \> \> \> \> \> $\it self\_defense\_John \leadsto \neg
 John\_must\_be\_punished \vdash$ \\
 \> \> \> \> \> \> \>  $\it \neg John\_must\_be\_punished$
\end{tabbing}

Using all derivable arguments, we can determine the following two
extensions. 
\begin{tabbing} mmm\= \kill
\> $\it E_1 = Th(\{ \begin{array}[t]{l}
\it John\_injures\_Peter, \\ 
\it John\_dragged\_into\_fight, \\ 
\it \neg Paul\_dragged\_into\_fight, \\ 
\it self\_defense\_John, \\ 
\it \neg John\_must\_be\_punished, \\
\it \neg (John\_injures\_Peter \leadsto John\_must\_be\_punished) \; \})
\end{array} $ 
\\ \\
\> $\it E_2 = Th(\{ \begin{array}[t]{l}
\it John\_injures\_Peter, \\ 
\it Paul\_dragged\_into\_fight, \\ 
\it \neg John\_dragged\_into\_fight, \\ 
%\it self\_defense\_Paul, \\
\it John\_must\_be\_punished \; \}) 
\end{array} $
\end{tabbing}
Since in only one of the two situations John must be punished,
we do not know whether John must be punished. Additional
information should be collected to enable us to make a choice
between the two situations that are represented by the two
extensions.

Reasoning by cases does not guarantee that the closure property
{\em Or} holds because cases are mutually exclusive. We can,
however, proof an {\em Exclusive Or} property.
\begin{theorem}
The defeasible theory $\langle \Sigma, D, \succ \rangle$ satisfies
{\em Exclusive Or}: \\
if $\Sigma \cup \{\varphi \wedge \neg\psi\} \nm \eta$, 
$\Sigma \cup \{\neg\varphi \wedge \psi\} \nm \eta$, then $\Sigma
\cup \{\varphi \xor \psi \} \nm \eta$;
\end{theorem}

\section{Related work} \label{RL}
In the literature, several argument systems that apply defeasible
rules have been proposed (Fox et al. 1992; Geffner 1994; Krause et
al. 1995; Pollock 1987; Prakken 1993; Prakken \& Vreeswijk 1999;
Simari \& Loui 1992; Vreeswijk 1991; Vreeswijk 1997). These
related papers can roughly be divided in three groups; those that
discuss the strength of an argument (Fox et al. 1992; Krause et
al. 1995), those that discuss the evaluation of arguments
supporting conflicting propositions (Geffner 1994; Prakken 1993;
Simari \& Loui 1992; Vreeswijk 1991) and those that discuss
validity of arguments (Pollock 1987; Pollock 1994; Prakken \&
Vreeswijk 1999; Simari \& Loui 1992; Vreeswijk 1997).

Krause, Ambler, Elvang-G{\o}ransson and Fox (Fox et al. 1992;
Krause et al. 1995) present argument systems that enable us to
determine the strength of an argument for a proposition They use a
simple logic consisting of atoms, including $\perp$, and Horn
clauses. For this logic they develop an argument system that
enables them to evaluate the strength of arguments for a
consistent set of propositions probabilistically\footnote{A rule
is not interpreted as representing a conditional probability.}.
Furthermore, the argument system enables them to evaluate the
strength of arguments for an inconsistent set of propositions
symbolically. Krause et al.
do not, however, discuss how to defeat one of the disagreeing
arguments.

Closely related to the strength of an argument is the evaluation
of disagreeing arguments that support an inconsistency. Simari and
Loui (1992) have proposed to apply Poole's definition of
specificity for this purpose. In this definition it is necessary
to consider all the rules of the disagreeing arguments. The same
approach is taken by Prakken (1993). Prakken argues that in legal
argumentation only the last rules of an argument for an
inconsistency are considered. In case of specificity, however, he
uses Poole definition.

Vreeswijk (1991) discusses some general principles to evaluate
disagreeing arguments. He proposes a scheme for evaluating
disagreeing arguments based on the types of these arguments. He
derives these types from the structure of the arguments.
Furthermore, he argues that beside these weak general principles
there are no general guild-lines to evaluate arguments. The
definition of specificity given in Section \ref{evaluate}
correspond with the general principles of Vreeswijk. However,
applying it as a preference relation does not.

Geffner (1994) argues that any rule of an argument for a
proposition $\varphi$ can be defeated if $\neg \varphi$ is a known
fact. As we have seen in Section \ref{argument-sys}, Geffner uses
causal rules which are a special kind of defeasible rules. These
kind of rules have been excluded from this paper. Many defeasible
rules are not causal rules. Furthermore, a discussion of causal
rules would also require a study of causality.

The theory of warrant is concerned with the validity of arguments.
These are the arguments that are not defeated by other arguments.
In (Pollock 1987), Pollock introduces the theory of inductive
warrant. Simari and Loui (1992) combine the theory of inductive
warrant with Poole's definition of specificity and study the
mathematical properties of the resulting system.

Pollock (1990) observes that his theory of inductive warrant is
not without problems. Therefore, he introduces a new theory of
warrant based on the idea of multiple extensions. Vreeswijk (1991)
has made a similar proposal.

In (Vreeswijk 1997), Vreeswijk relates the theory of inductive
warrant to a theory of warrant based on extensions. He discusses
several ways of defining a theory of warrant and discusses the
mutual relation.

Dung (1995) discusses the theory of warrant on an abstract level.
He presents several notions of acceptable arguments based on a set
of arguments and a binary attack relation on the set of arguments.
Here, arguments are considered as atomic entities. No relation
with an argument system is specified. The different notions of
acceptability correspond with different ways of dealing with
self-defeat and with multiple extensions. The extensions based on
Definition \ref{defeat} correspond with Dung's {\em stable
extensions} and the extensions based on the partial status
assignment correspond with Dung's {\em preferred extensions}.

Prakken and Vreeswijk (1999) give an overview of argument systems
proposed in the literature. In their overview they discuss the
strong and weak points of many theories of warrant that have been
proposed in the literature. One of the aspect they look at is the
handling of self-defeat. Furthermore, they discuss several
arguments systems in detail.

The theories of warrant presented by Pollock (1987, 1994), Dung
(1995), Vreeswijk (1997), and Prakken and Vreeswijk (1999), start
from a {\em defeat relation} on the set of derived arguments. This
relation is the result of resolving conflicts between the
propositions supported by the arguments. The theory of warrant as
described by these authors is concerned with selecting a set of
valid arguments. It seems more natural, however, to express the
validity of an argument in terms of the validity of the defeasible
steps that are used in the argument. In this respect, the theory
of warrant proposed in this paper differs from the above mentioned
proposals.

Nute (1988, 1994) proposes a {\em defeasible logic} that is
closely related to argument systems for reasoning with defeasible
rules. Nute's logic, which seems to be inspired by logic
programming, does not derive arguments that are subsequently
evaluated to determine the valid conclusions. Instead, Nute
introduces a proof system that guarantees that only valid
conclusions are derived. The proof system consists of four rules
for deriving formulas that hold and three rules for formulas that
cannot hold (Nute 1994). Since the preference relation
`specificity' is an integral part of these rules, the formulation
of the rules is rather complex. Furthermore, the approach is less
flexible. Adding other preference relations requires a
reformulation of the rules.

Conclusions that follow form the defeasible logic can be weaker
than the conclusions one would expect. Ideally, in case of
multiple extensions, conclusions should be based on those
arguments that are valid in all extensions. Prakken \& Vreeswijk
(1999) point out that in Nute's defeasible logic, this is not
always the case. Nute's approach sometimes allows for a smaller
number of conclusions than necessary.

An advantage of Nute's approach is its suitability for realizing
an implementation. His approach gives us a recursive procedure for
the determination of validity of a conclusion. This is in contrast
with the procedure proposed by Loui (1998). Loui views the
procedure of determining the validity of a conclusion as a
dialectics satisfying some protocol.

\section{Conclusion}
A defeasible rule describes a preferred or a probabilistic
relations between propositions. Such defeasible rules can be used
to construct arguments for propositions. For both interpretations
of a defeasible rule, we conclude that an inconsistency can be
resolved by defeating one of the last rules of the argument
supporting the inconsistency. Furthermore, we conclude that it
suffices to consider only the rules are candidates for defeat, to
select the rule to be defeated. For this purpose, a preference
relation on the set of rules has been proposed. A definition of
{\em specificity} that generates such a preference relation on the
set of rules has been given.

%In legal argumentation, an inconsistency is resolved by evaluating
%the last rules of the argument for the inconsistency and
%subsequently defeating one of these rules. This confirms the
%presented results.

Since one of the last rules of the argument for an inconsistency
must be defeated, we can formulate an argument for the defeat of
this rule. Such an argument {\em undercuts} the application of the
rule. Hence, {\em rebutting} defeat is reformulated as {\em
undercutting} defeat after determining the rule to be defeated.
Although this approach does not lead to new results, it is more
intuitive. An argument gives a valid justification for a
conclusion, if all step (the rules) of the justification are
valid. Furthermore, it enables us to determine the extensions of
valid beliefs using a Reason Maintenance System.

A relation between default logic and the proposed argument system
has been established and closure properties have been studied.
Finally, an extension of the argument system enabling reasoning by
cases has been proposed.

\section*{Appendix A}
\setcounter{theorem}{0} \setcounter{proposition}{0}
\begin{proposition}
    The set of defeated rules $\Omega$ are incomparable.
    i.e.\ for each $\Lambda \not= \Omega$ such that $\Lambda =
    \textsl{Defeat}(\Lambda)$, neither $\Lambda \subset \Omega$
    nor $\Lambda
    \supset \Omega$ holds.
\end{proposition}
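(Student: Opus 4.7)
The plan is to prove the proposition by first observing that the operator $\textsl{Defeat}$ is antimonotonic with respect to set inclusion, and then deriving a contradiction from either of the comparability assumptions $\Lambda \subsetneq \Omega$ or $\Lambda \supsetneq \Omega$.

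First I would establish the key lemma: if $\Gamma_1 \subseteq \Gamma_2$, then $\textsl{Defeat}(\Gamma_2) \subseteq \textsl{Defeat}(\Gamma_1)$. This is immediate from the definition, because membership of $\alpha \leadsto \beta$ in $\textsl{Defeat}(\Gamma)$ requires the existence of some argument $A_{\neg (\alpha \leadsto \beta)} \in \mathcal{A}$ with $\tilde{A}_{\neg (\alpha \leadsto \beta)} \cap \Gamma = \emptyset$; shrinking $\Gamma$ only makes this disjointness condition easier to satisfy, so no witnesses are lost.

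Next I would apply this antimonotonicity to both cases. Suppose for contradiction that $\Lambda \subsetneq \Omega$ with $\Lambda = \textsl{Defeat}(\Lambda)$ and $\Omega = \textsl{Defeat}(\Omega)$. Antimonotonicity gives $\textsl{Defeat}(\Omega) \subseteq \textsl{Defeat}(\Lambda)$, i.e., $\Omega \subseteq \Lambda$, contradicting $\Lambda \subsetneq \Omega$. Symmetrically, if $\Lambda \supsetneq \Omega$, the same antimonotonicity yields $\Lambda = \textsl{Defeat}(\Lambda) \subseteq \textsl{Defeat}(\Omega) = \Omega$, again a contradiction. Hence neither $\Lambda \subset \Omega$ nor $\Lambda \supset \Omega$ can hold for any fixed point $\Lambda$ distinct from $\Omega$.

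There is no real obstacle here; the whole argument rests on the antimonotonicity of $\textsl{Defeat}$, which is a direct reading of the definition. The only thing to be careful about is to state the antimonotonicity in the right direction (shrinking the input enlarges the output) and to note that the proof treats $\Omega$ and $\Lambda$ symmetrically, so it also establishes the dual statement that $\Omega$ cannot be strictly contained in or strictly contain any other fixed point.
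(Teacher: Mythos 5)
Your proof is correct and follows exactly the same route as the paper: both rest on the antimonotonicity of $\textsl{Defeat}$ (which the paper invokes as ``by the definition of \textsl{Defeat}'') and derive a contradiction in each of the two inclusion cases. Your version merely makes the antimonotonicity lemma and its justification explicit, which is a reasonable elaboration rather than a different argument.
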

\begin{proof}
    Suppose $\Lambda \subset \Omega$.
    Then, by the definition of \textsl{Defeat}: $\textsl{Defeat}(\Lambda)
    \supseteq \textsl{Defeat}(\Omega)$.
    Hence, $\Lambda \supseteq \Omega$.
    Contradiction.

    Suppose $\Omega \subset \Lambda$.
    Then, by the definition of \textsl{Defeat}: $\textsl{Defeat}(\Omega)
    \supseteq \textsl{Defeat}(\Lambda)$.
    Hence, $\Omega \supseteq \Lambda$.
    Contradiction.
\end{proof}

\begin{proposition}
A set of rules $\Omega$ is a fixed point of \textsl{Defeat} if and
only if there is a status assignment such that $\Omega$ is the set
of rules that is assigned the status {\em defeated}.
\end{proposition}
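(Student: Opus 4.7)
The plan is to prove the two directions by direct definitional unfolding, observing that the status-assignment condition is essentially a pointwise reformulation of the fixed-point equation $\Omega = \textsl{Defeat}(\Omega)$. In both directions the bridge between the two formalisms is: a rule $r$ is labeled \emph{defeated} iff $r \in \Omega$, and \emph{undefeated} iff $r \notin \Omega$.

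For the ($\Rightarrow$) direction, I would assume $\Omega = \textsl{Defeat}(\Omega)$ and define the status assignment by that bridge. To check the condition, take any $\varphi \leadsto \psi \in D$. If it lies in $\Omega = \textsl{Defeat}(\Omega)$, then by the definition of \textsl{Defeat} there is an argument $A_{\neg(\varphi \leadsto \psi)}$ with $\tilde{A}_{\neg(\varphi \leadsto \psi)} \cap \Omega = \emptyset$; under the bridge, every rule used in $A_{\neg(\varphi \leadsto \psi)}$ is labeled \emph{undefeated}, which is exactly the clause witnessing that $\varphi \leadsto \psi$ may be labeled \emph{defeated}. Conversely if $\varphi \leadsto \psi \notin \Omega$, then for every argument $A_{\neg(\varphi \leadsto \psi)}$ we must have $\tilde{A}_{\neg(\varphi \leadsto \psi)} \cap \Omega \neq \emptyset$, so some rule in each such argument is labeled \emph{defeated}; this is precisely the ``otherwise'' clause forcing $\varphi \leadsto \psi$ to be assigned \emph{undefeated}.

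For the ($\Leftarrow$) direction, start with a status assignment and let $\Omega$ be the set of rules assigned \emph{defeated}. To show $\Omega \subseteq \textsl{Defeat}(\Omega)$, take $\varphi \leadsto \psi \in \Omega$; by the status-assignment condition there is an argument $A_{\neg(\varphi \leadsto \psi)}$ whose rules are all \emph{undefeated}, hence all outside $\Omega$, so $\tilde{A}_{\neg(\varphi \leadsto \psi)} \cap \Omega = \emptyset$ and $\varphi \leadsto \psi \in \textsl{Defeat}(\Omega)$. For the reverse inclusion, take $\varphi \leadsto \psi \in \textsl{Defeat}(\Omega)$; by definition of \textsl{Defeat} there is an argument $A_{\neg(\varphi \leadsto \psi)}$ with no rule in $\Omega$, i.e.\ every such rule is labeled \emph{undefeated}, so the status-assignment condition forces $\varphi \leadsto \psi$ to be labeled \emph{defeated}, placing it in $\Omega$.

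Both halves are really the same equivalence read in two directions, so I do not expect any genuine obstacle. The only point that needs care is being precise about the ``otherwise'' clause in the status-assignment definition: it must be read as ``if no such argument exists, then \emph{undefeated}'', which matches the universal quantifier appearing implicitly when $r \notin \textsl{Defeat}(\Omega)$. As long as that quantifier structure is handled correctly, the proof is a clean two-way unpacking of definitions and should fit in a few lines.
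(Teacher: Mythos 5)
Your proposal is correct and matches the paper's own proof in substance: both directions are the same definitional unfolding via the bridge ``labeled \emph{defeated} iff in $\Omega$,'' with your careful reading of the ``otherwise'' clause as the negated existential being exactly the point the paper also relies on. The only cosmetic difference is that the paper phrases each direction as a proof by contradiction while you verify the equivalence directly, which changes nothing mathematically.
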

\begin{proof}
Let $\Omega = \{ \varphi \leadsto \psi \mid \varphi \leadsto \psi
\mbox{ is assigned the status {\em defeated}} \}$.

Suppose that $\Omega$ is not a fixed point. Then, for some
$\varphi \leadsto \psi$, $\varphi \leadsto \psi \in
\textsl{Defeat}(\Omega)$ and $\varphi \leadsto \psi \not\in
\Omega$ or $\varphi \leadsto \psi \not\in \textsl{Defeat}(\Omega)$
and $\varphi \leadsto \psi \in \Omega$.

Suppose that $\varphi \leadsto \psi \in \textsl{Defeat}(\Omega)$
and $\varphi \leadsto \psi \not\in \Omega$. Since $\varphi
\leadsto \psi \in \textsl{Defeat}(\Omega)$, there is an argument
$A_{\neg(\varphi \leadsto \psi)}$ such that $A_{\neg(\varphi
\leadsto \psi)} \cap \Omega = \emptyset$. But then, by the
definition of a status assignment, $\varphi \leadsto \psi$ is
assigned the status {\em defeated}. Contradiction.

Suppose that $\varphi \leadsto \psi \not\in
\textsl{Defeat}(\Omega)$ and $\varphi \leadsto \psi \in \Omega$.
Since $\varphi \leadsto \psi \not\in \textsl{Defeat}(\Omega)$,
there is no argument $A_{\neg(\varphi \leadsto \psi)}$ such that
$A_{\neg(\varphi \leadsto \psi)} \cap \Omega = \emptyset$. But
then, by the definition of a status assignment, $\varphi \leadsto
\psi$ is assigned the status {\em undefeated}. Contradiction.

Hence, $\Omega$ is a fixed point.

Now let $\Omega$ be a fixed point of \textsl{Defeat}.
We assign the status {\em defeated} to all rules in $\Omega$ and 
{\em undefeated} to all rules not in $\Omega$.

Suppose that this is not a valid status assignment. Then there is 
a rule $\varphi \leadsto \psi$ that is assigned the status {\em defeated}
while there is no argument $A_{\neg(\varphi \leadsto \psi)}$ such
that every $\alpha \leadsto \beta \in \tilde{A}_{\neg(\varphi
\leadsto \psi)}$ is assigned the status {\em undefeated}, or
$\varphi \leadsto \psi$ that is assigned the status {\em
undefeated} while there in an argument $A_{\neg(\varphi \leadsto
\psi)}$ such that every $\alpha \leadsto \beta \in
\tilde{A}_{\neg(\varphi \leadsto \psi)}$ is assigned the status
{\em undefeated}.

In the former case, for every argument $A_{\neg(\varphi \leadsto
\psi)}$ there is a rule $\alpha \leadsto \beta \in
\tilde{A}_{\neg(\varphi \leadsto \psi)}$ that is assigned the
status {\em defeated}. Therefore, for every argument 
$A_{\neg(\varphi \leadsto \psi)}$, $\tilde{A}_{\neg(\varphi
\leadsto \psi)} \cap \Omega \not= \emptyset$. Hence $\varphi
\leadsto \psi \not\in \Omega$ and therefore $\varphi \leadsto
\psi$ is assigned the status {\em undefeated}. Contradiction.

In the latter case, there is an argument $A_{\neg(\varphi \leadsto
\psi)}$ such that every $\alpha \leadsto \beta \in
\tilde{A}_{\neg(\varphi \leadsto \psi)}$ is assigned the status
{\em undefeated}. Therefore, $\tilde{A}_{\neg(\varphi \leadsto
\psi)} \cap \Omega = \emptyset$. Hence $\varphi \leadsto \psi \in
\Omega$ and therefore $\varphi \leadsto \psi$ is assigned the
status {\em defeated}. Contradiction.
\end{proof}

To prove Theorem \ref{equivalence}, the following lemmas will be
used.

\begin{lemma} \label{l3}
Let $\Gamma$ be a set of propositions and let $\cal E$ be the
deductive closure of $\Gamma$. Furthermore, let there be an
argument for each proposition in $\Gamma$.

Then for each proposition in $\cal E$ we can construct an argument $A$.
\end{lemma}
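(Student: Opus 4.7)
The plan is to apply the second clause of Definition~\ref{argument} directly. Given any $\psi \in \mathcal{E}$, by the definition of deductive closure we have $\Gamma \vdash \psi$. Since propositional derivability is finitary, there is a finite subset $\{\varphi_1,\ldots,\varphi_n\} \subseteq \Gamma$ such that $\varphi_1,\ldots,\varphi_n \vdash \psi$. By the hypothesis of the lemma, for each $i$ we have an argument $A_i$ supporting $\varphi_i$. Then the second clause of Definition~\ref{argument} tells us that $A = A_1 \cup \cdots \cup A_n$ is itself an argument for $\psi$, which is exactly what we need.

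The argument is therefore essentially a one-line appeal to the definition, with compactness (or the finitary nature of $\vdash$) used only to ensure that the finite-union construction in the definition applies. The one point worth checking is the boundary case when $\psi$ is a tautology and no premise from $\Gamma$ is needed: here $n=0$, and we must make sure the empty union $A = \emptyset$ is admitted as an argument for $\psi$ via the second clause (with an empty antecedent set $\varphi_1,\ldots,\varphi_n$ deriving $\psi$). This is the only mildly subtle point; otherwise the proof is a direct structural consequence of the inductive definition of arguments.
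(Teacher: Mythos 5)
Your proof is correct and follows essentially the same route as the paper's: the paper likewise applies the second clause of Definition~\ref{argument} and takes $\bigcup \{ A_\psi \mid \psi \in \Gamma \}$ as the argument for any $\varphi$ with $\Gamma \vdash \varphi$. Your version is slightly more careful in invoking the finitary nature of $\vdash$ to select a finite subset of $\Gamma$ (which the definition's finite-union clause strictly requires) and in flagging the $n=0$ tautology case, but these are refinements of the same one-line idea rather than a different approach.
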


\begin{proof}
For each $\varphi \in {\cal E} \backslash \Gamma$ there holds that
$\Gamma \vdash \varphi$.
Hence, $\bigcup \{ A_\psi \mid \psi \in \Gamma \}$ is an argument for
$\varphi$.
\end{proof}

\begin{lemma} \label{l4}
Let $\cal E$ be an extension according to Definition
\ref{extensional} and let $\Omega = \{ \alpha \leadsto \beta \mid
\neg (\alpha \leadsto \beta) \in {\cal E} \}$. Furthermore, let
there be an argument $A$ for each proposition in $\cal E$ and let
$\tilde{A} \cap \Omega = \emptyset$.

Then $\Omega$ satisfies Definition \ref{defeat}, $\Omega =
\textsl{Defeat}(\Omega)$.
\end{lemma}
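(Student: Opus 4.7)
The plan is to establish both inclusions $\Omega \subseteq \textsl{Defeat}(\Omega)$ and $\textsl{Defeat}(\Omega) \subseteq \Omega$, using Definition \ref{extensional} together with the running hypothesis that every proposition in $\cal E$ admits an argument whose defeasible rules lie outside $\Omega$.

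For $\Omega \subseteq \textsl{Defeat}(\Omega)$, I would take $\varphi \leadsto \psi \in \Omega$, so $\neg(\varphi \leadsto \psi) \in {\cal E}$. Since $\Gamma({\cal E}) = {\cal E}$ is the \emph{smallest} set satisfying clauses 1--4, this membership must be witnessed by clause 3, giving a set $\Delta \subseteq D$ that defeats $\varphi \leadsto \psi$. The defeating conditions deliver $\varphi \in {\cal E}$, each antecedent $\eta$ and consequent $\mu$ of a rule in $\Delta$ in $\cal E$, a deduction $\{\mu \mid \eta \leadsto \mu \in \Delta\} \cup \Sigma \vdash \neg\psi$, and the preference condition that no $\eta \leadsto \mu \in \Delta$ satisfies $\varphi \leadsto \psi \succ \eta \leadsto \mu$. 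Using the hypothesis I pick arguments $A_\varphi$, $A_\eta$ (each with rules outside $\Omega$) and assemble
\[ A_\perp = \{\langle A_\varphi, \varphi \leadsto \psi \rangle\} \cup \bigcup_{\eta \leadsto \mu \in \Delta} \{\langle A_\eta, \eta \leadsto \mu \rangle\} \cup \{\langle \emptyset, \sigma \rangle \mid \sigma \in \Sigma \text{ used}\}, \]
an argument for $\perp$. The preference condition puts $\varphi \leadsto \psi$ in $\min_\succ(\vec{A}_\perp)$, so Definition \ref{defeat-arg} yields $A_{\neg(\varphi \leadsto \psi)}$, whose rules are $\tilde A_\varphi \cup \Delta \cup \bigcup_\eta \tilde A_\eta$; the sub-argument parts avoid $\Omega$ by hypothesis.

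For the reverse inclusion $\textsl{Defeat}(\Omega) \subseteq \Omega$, let $\varphi \leadsto \psi \in \textsl{Defeat}(\Omega)$, so there is $A_{\neg(\varphi \leadsto \psi)}$ with $\tilde A_{\neg(\varphi \leadsto \psi)} \cap \Omega = \emptyset$. By Definition \ref{defeat-arg}, this came from an $A_\perp$ with $\varphi \leadsto \psi \in \min_\succ(\vec A_\perp)$. Let $\Delta = \vec A_\perp \setminus \{\varphi \leadsto \psi\}$. I would then verify the four defeating clauses of Definition \ref{extensional}: $\varphi \in {\cal E}$ and $\{\eta,\mu\} \subseteq {\cal E}$ for each $\eta \leadsto \mu \in \Delta$ follow by the auxiliary claim below; the deduction $\{\mu \mid \eta \leadsto \mu \in \Delta\} \cup \Sigma \vdash \neg\psi$ is read off the inconsistency carried by $A_\perp$ (remove $\psi$, which is contributed solely by the subtree under $\varphi \leadsto \psi$); and the preference condition is immediate from $\varphi \leadsto \psi \in \min_\succ(\vec A_\perp)$. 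Clause 3 then forces $\neg(\varphi \leadsto \psi) \in {\cal E}$, i.e.\ $\varphi \leadsto \psi \in \Omega$.

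The main obstacle will be the auxiliary claim used in both directions, that any argument $A$ with $\tilde A \cap \Omega = \emptyset$ supports a conclusion in ${\cal E}$. In the forward direction this is what lets me replace, if necessary, any rule in $\Delta$ whose own defeat lies in ${\cal E}$ by an alternative derivation of the same $\mu$ using only rules outside $\Omega$, ensuring $\Delta \subseteq D \setminus \Omega$. In the backward direction it is what pulls $\varphi, \eta, \mu$ back into ${\cal E}$ from sub-arguments. I plan to prove it by induction on argument structure, matching the construction clauses of Definition \ref{argument} to clauses 1, 2 and 4 of Definition \ref{extensional}; the delicate case is a defeasible step $\alpha \leadsto \beta$ applied inside $A$, where clause 4's side condition $\neg(\alpha \leadsto \beta) \notin {\cal E}$ must be discharged from $\alpha \leadsto \beta \notin \Omega$, which is precisely the definition of $\Omega$.
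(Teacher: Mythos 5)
Your overall strategy---proving both inclusions $\Omega \subseteq \textsl{Defeat}(\Omega)$ and $\textsl{Defeat}(\Omega) \subseteq \Omega$, supported by the auxiliary claim that any argument whose rules avoid $\Omega$ only supports propositions of $\cal E$---is the same as the paper's, and you are in fact more explicit than the paper about that auxiliary claim, which the paper uses silently. However, there is a concrete missing case in both directions. You assert that $\neg(\varphi \leadsto \psi) \in {\cal E}$ ``must be witnessed by clause 3'' and, conversely, that any argument $A_{\neg(\varphi \leadsto \psi)}$ ``came from an $A_\perp$'' via Definition \ref{defeat-arg}. Neither is true in general, because $D$ may contain \emph{undercutting} rules of the form $\gamma \leadsto \neg(\varphi \leadsto \psi)$ (the consequent of a defeasible rule is allowed to be the negation of another rule; this is a central feature of the system). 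In that case $\neg(\varphi \leadsto \psi)$ can enter $\cal E$ through clause 4 of Definition \ref{extensional} rather than clause 3, and an argument for $\neg(\varphi \leadsto \psi)$ can be built directly by the third clause of Definition \ref{argument} as $\{\langle A_\gamma, \gamma \leadsto \neg(\varphi \leadsto \psi)\rangle\}$ rather than by Definition \ref{defeat-arg}. The paper treats this as an explicit second case in each inclusion; both are easy to dispatch (forward: take the argument $\{\langle A_\gamma, \gamma \leadsto \neg(\varphi \leadsto \psi)\rangle\}$ with $A_\gamma$ supplied by the hypothesis; backward: your auxiliary claim gives $\gamma \in {\cal E}$, and clause 4 applies because the undercutting rule itself lies outside $\Omega$), but as written your case analysis is incomplete and the sentence ``must be witnessed by clause 3'' is false.

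A smaller caution: your repair for a rule $\eta \leadsto \mu \in \Delta$ that happens to lie in $\Omega$ (replacing the step $\langle A_\eta, \eta \leadsto \mu\rangle$ by a direct argument $A_\mu$ with rules outside $\Omega$) changes the set of last rules $\vec{A}_\perp$, so the minimality of $\varphi \leadsto \psi$ under $\succ$, which you need to invoke Definition \ref{defeat-arg}, is no longer guaranteed by the preference condition of ``defeats'' (that condition only constrains $\Delta$, not $\vec{A}_\mu$). The paper sidesteps this by simply building the argument from $\Delta$ itself and asserting its rules avoid $\Omega$; your instinct that something needs checking here is sound, but the proposed fix does not obviously close the gap. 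Apart from these two points the proof goes through as in the paper.
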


\begin{proof}
Suppose that $\alpha \leadsto \beta \in \Omega$ and $\alpha
\leadsto \beta \not\in \textsl{Defeat}(\Omega)$.

Since $\alpha \leadsto \beta \in \Omega$, either there exists a
$\gamma \leadsto \neg ( \alpha \leadsto \beta)$ and $\gamma \in
{\cal E}$, or there exists a $\Delta$ that defeats $\alpha
\leadsto \beta$, $\alpha \in {\cal E}$ and $\Delta \subseteq \{
\eta \leadsto \mu \in D \mid \{ \eta, \mu \} \subseteq {\cal E}
\}$ such that $\{ \mu \mid \eta \leadsto \mu \in \Delta \} \cup
\Sigma \vdash \neg \beta$ and for no $\eta \leadsto \mu \in
\Delta$ there holds: $\alpha \leadsto \beta \succ \eta \leadsto
\mu$.

In the former case there exists an argument $A_{\neg(\alpha
\leadsto \beta)} = \{ \langle  A_\gamma, \gamma \leadsto \neg (
\alpha \leadsto \beta) \rangle \}$ and $\tilde{A}_{\neg(\alpha
\leadsto \beta)} \cap \Omega = \emptyset$. Hence, $\alpha \leadsto
\beta \in \textsl{Defeat}(\Omega)$. Contradiction.

In the latter case there exists an argument $A_{\neg(\alpha
\leadsto \beta)} = \{ \langle  A_\eta, \eta \leadsto \mu \rangle
\mid \eta \leadsto \mu \in \Delta \} \cup \{ \langle \emptyset,
\varphi \rangle \mid \varphi \in \Sigma \} \cup A_\alpha$.
Furthermore, $\tilde{A}_{\neg(\alpha \leadsto \beta)} \cap \Omega
= \emptyset$. Hence, $\alpha \leadsto \beta \in
\textsl{Defeat}(\Omega)$. Contradiction.

Hence, $\Omega \subseteq \textsl{Defeat}(\Omega)$.

Suppose that $\alpha \leadsto \beta \not\in \Omega$ and $\alpha
\leadsto \beta \in \textsl{Defeat}(\Omega)$. Then there exists an
argument $A_{\neg (\alpha \leadsto \beta)}$ such that $A_{\neg
(\alpha \leadsto \beta)} \cap \Omega = \emptyset$. This implies
that either there exists an argument $A_{\alpha}$ for $\alpha$ 
such that $A_{\alpha} \cap \Omega = \emptyset$ and an argument
$A_{\neg \beta}$ for $\neg
\beta$ such that $A_{\neg \beta} \cap \Omega = \emptyset$, or that
$\gamma \leadsto \neg (\alpha \leadsto \beta) \in D$ and there
exists an argument $A_\gamma$ for $\gamma$ such that $A_\gamma
\cap \Omega = \emptyset$.

In the former case, $\alpha \in {\cal E}$ and $\neg \beta \in {\cal E}$.
But then $\neg (\alpha \leadsto \beta) \in \mathcal{E}$.
Contradiction.

In the latter case, $\gamma \in {\cal E}$.
But then $\neg (\alpha \leadsto \beta) \in \mathcal{E}$.
Contradiction.

Hence, $\Omega = \textsl{Defeat}(\Omega)$.
\end{proof}

\begin{theorem}
Let $\langle \Sigma, D, \succ \rangle$ be a defeasible theory.
The set of extensions determined by the argument system is
equal to the set of extensions determined by the default logic.
\end{theorem}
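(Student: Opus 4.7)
The plan is to establish a bijection between the fixed points of \textsl{Defeat} and the fixed points of $\Gamma$ via the correspondences $\Omega \mapsto \mathcal{E}_\Omega := \{\varphi \mid \exists A_\varphi \in \mathcal{A},\; \tilde{A}_\varphi \cap \Omega = \emptyset\}$ in one direction and $\mathcal{E} \mapsto \Omega_\mathcal{E} := \{\alpha \leadsto \beta \mid \neg(\alpha \leadsto \beta) \in \mathcal{E}\}$ in the other, and then verify that each sends fixed points to fixed points and that the two maps are mutual inverses on fixed points.

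For the direction from the argument system to the default logic, I would assume $\Omega = \textsl{Defeat}(\Omega)$, set $\mathcal{E} = \mathcal{E}_\Omega$, and show $\mathcal{E} = \Gamma(\mathcal{E})$ by verifying the four clauses of Definition \ref{extensional}. Clause (1) is immediate via the trivial argument $\{\langle \emptyset, \varphi \rangle\}$ for each premise, which has empty rule-set. Clause (2) follows from Lemma \ref{l3}. Clause (4) is handled by extending any argument for $\varphi$ with the rule $\varphi \leadsto \psi$, using that $\neg(\varphi \leadsto \psi) \notin \mathcal{E}$ forces $\varphi \leadsto \psi \notin \Omega$. For clause (3), given $\Delta$ that defeats $\varphi \leadsto \psi$, I would concatenate arguments for the antecedents $\eta$ of each $\eta \leadsto \mu \in \Delta$ with the rules themselves, together with an argument for $\varphi$, into an argument $A_{\neg(\varphi \leadsto \psi)}$ whose rule-set avoids $\Omega$; this yields $\varphi \leadsto \psi \in \textsl{Defeat}(\Omega) = \Omega$ and hence $\neg(\varphi \leadsto \psi) \in \mathcal{E}$. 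Minimality of $\Gamma(\mathcal{E})$ I would prove by induction on the structure of arguments, demonstrating that every conclusion of a rule-undefeated argument must belong to any set closed under clauses (1)--(4) relative to $\mathcal{E}$.

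For the reverse direction, I would assume $\mathcal{E} = \Gamma(\mathcal{E})$ and set $\Omega = \Omega_\mathcal{E}$. Lemma \ref{l4} then yields $\Omega = \textsl{Defeat}(\Omega)$ provided its hypothesis is discharged: every $\varphi \in \mathcal{E}$ admits an argument $A_\varphi$ with $\tilde{A}_\varphi \cap \Omega = \emptyset$. I would verify this hypothesis by induction mirroring the inductive construction of $\Gamma(\mathcal{E})$: premises give trivial arguments; Lemma \ref{l3} handles the deductive-closure clause; the undercutting construction used above handles clause (3); and clause (4) is handled by appending $\varphi \leadsto \psi$, which lies outside $\Omega$ because $\neg(\varphi \leadsto \psi) \notin \mathcal{E}$, to the inductively obtained argument for $\varphi$. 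Equality $\mathcal{E}_\Omega = \mathcal{E}$ then follows by a straightforward double inclusion using the just-constructed arguments together with the minimality of $\Gamma(\mathcal{E})$.

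The main obstacle will be the minimality argument in the first direction. The inductive definition of $\Gamma$ mixes the purely monotone clauses (1), (2), (4) with the non-monotone clause (3), whose applicability depends on $\Delta$ being composed entirely of rules whose antecedents and consequents already lie in $\Gamma(S)$, and on the preference relation $\succ$. Translating the tree-structured recursion of arguments into this layered inductive construction --- and keeping the correspondence tight between undercutting arguments on one side and the defeating sets $\Delta$ with their rule-consequents on the other --- is where the bookkeeping will be most delicate.
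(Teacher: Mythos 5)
Your plan is correct and follows essentially the same route as the paper: both directions are handled by the correspondence $\Omega \leftrightarrow \{\alpha\leadsto\beta \mid \neg(\alpha\leadsto\beta)\in\mathcal{E}\}$, with the four clauses of Definition \ref{extensional} checked against undercutting arguments, Lemma \ref{l3} supplying deductive closure, Lemma \ref{l4} discharging the fixed-point condition on \textsl{Defeat}, minimality of $\Gamma(\mathcal{E})$ proved by descending to a smallest offending sub-argument, and the reverse direction built by the same staged induction ($\Gamma_i$, $\mathcal{E}_i$) that mirrors the layered construction of $\Gamma$. You have also correctly located the delicate bookkeeping in the minimality step and in matching undercutting arguments to the defeating sets $\Delta$.
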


\begin{proof}
Let $\cal E$ be an extension according to Definition
\ref{argextend}. We will proof that ${\cal E}$ is also an
extension according to Definition \ref{extensional} by showing
that it is a fixed point satisfying the four requirements of
Definition \ref{extensional}; i.e., we assume that \ ${\cal E} = \Gamma({\cal E})$.
\begin{enumerate}
\item
Clearly for each $\alpha \in \Sigma$ we have an argument $\{ \langle
 \emptyset, \alpha \rangle \}$.
Since it contains no rules, it cannot be defeated. Therefore,
$\Sigma \subseteq {\cal E}$.
\item
According to the definition of an argument, $\cal E$ is
deductively closed.
\item
Let $\Delta = \{ \eta_1 \leadsto \mu_1,...,\eta_n \leadsto \mu_n
\}$ defeat $\alpha \leadsto \beta$ given $\cal E$. Then $\{
\mu_1,..., \mu_n, \alpha \} \subseteq \Gamma({\cal E}) = {\cal E}$ and for no $\eta_i
\leadsto \mu_i \in \Delta$: $\alpha \leadsto \beta \succ \eta_i
\leadsto \mu_i$. Since $\{ \mu_1,..., \mu_n, \alpha \} \subseteq
{\cal E}$, we have valid arguments $A_{\mu_1},...,A_{\mu_n},
A_\alpha$. Hence, we have a valid argument $A_{\neg (\alpha
\leadsto \beta)}$ for $\neg (\alpha \leadsto \beta)$. Therefore,
$\neg (\alpha \leadsto \beta) \in {\cal E}$.
\item
Let $\alpha \in \Gamma({\cal E}) = {\cal E}$ and $\neg (\alpha \leadsto \beta)
\not\in {\cal E}$. Then there exists a valid argument $A_\beta =
\{ \langle A_\alpha, \alpha \leadsto \beta \rangle \}$ for
$\beta$. Hence, $\beta \in {\cal E} = \Gamma({\cal E})$.
\end{enumerate}
Hence, $\Gamma ({\cal E}) \subseteq {\cal E}$.

Suppose that ${\cal E}$ is not a minimal set satisfying the
requirements of $\Gamma({\cal E})$. Then there is a $\varphi \in
{\cal E} \backslash \Gamma({\cal E})$ and a corresponding valid argument
$A_\varphi$. Let $A_\psi$ be the smallest sub-argument such that
$\psi \not\in \Gamma({\cal E})$.

Suppose that $A_\psi = \{ \langle \emptyset, \psi \rangle \}$.
Since $\psi \in \Sigma$, $\psi \in \Gamma({\cal E})$.
Contradiction.

Suppose that $\varphi_1,...,\varphi_n \vdash \psi$ and $\{
A_{\varphi_1},...,A_{\varphi_n} \} \subseteq {\cal A}$. Then,
since $A_\psi$ is the smallest sub-argument, $\{
\varphi_1,...,\varphi_n \} \subseteq \Gamma({\cal E})$. Therefore
$\psi \in \Gamma({\cal E})$. Contradiction.

Suppose that $A_\psi$ with $\psi = \neg(\eta_i \leadsto \mu_i)$ is
the result of an argument $A_\perp = \{ \langle A'_1, \eta_1
\leadsto \mu_1 \rangle ,\ldots, \langle A'_k, \eta_k \leadsto
\mu_k \rangle \langle \emptyset, \sigma_{1} \rangle ,\ldots,
\langle \emptyset, \sigma_{\ell} \rangle \}$ for an inconsistency.
Clearly for no $j \not = i$: $\eta_j \leadsto \mu_j \succ \eta_i
\leadsto \mu_i$. Since $A_\psi$ is the smallest sub-argument, $\{
\mu_1,...,\mu_{i-1},\mu_{i+1},...,\mu_k,
\sigma_{1},...,\sigma_\ell \} \subseteq \Gamma({\cal E})$.
Therefore $\psi = \neg(\eta_i \leadsto \mu_i) \in \Gamma({\cal
E})$. Contradiction.

Suppose that $A_\psi = \{ \langle A_\mu, \eta \leadsto \psi
\rangle \}$. Since $\cal E$ is an extension according to
Definition \ref{argextend}, there is no valid argument for $\neg
(\eta \leadsto \psi)$. Therefore, $\neg (\eta \leadsto \psi)
\not\in {\cal E}$. Hence, $\psi \in \Gamma({\cal E})$.
Contradiction.

Hence, ${\cal E}$ is a fixed point of $\Gamma$.

\vspace{2mm}

Let $\cal E$ be an extension according to Definition
\ref{extensional} and let $\Omega = \{ \alpha \leadsto \beta \mid
\neg (\alpha \leadsto \beta) \in {\cal E} \}$. 
So, ${\cal E} = \Gamma({\cal E})$. We will proof that
${\cal E}$ is an extension according to Definition \ref{argextend}
by showing that for each proposition in ${\cal E}$ there is a
valid argument and for each proposition not in ${\cal E}$ there is
no such argument. We will show  that there is a valid argument for
each $\varphi \in {\cal E}$ by showing that we can construct an
argument $A$ for each $\varphi \in {\cal E}$ such that $\tilde{A}
\cap \Omega = \emptyset$. If we have an argument for each $\varphi
\in {\cal E}$, then, by Lemma \ref{l4}, $\Omega$ satisfies
Definition \ref{defeat}, i.e.\ $\Omega = \textsl{Defeat}(\Omega)$.
Since for each $\varphi \in {\cal E}$, we have an argument $A$
such that $\tilde{A} \cap \Omega = \emptyset$, $A$ must be a valid
argument for $\varphi$

Let $\Gamma_0 = \Sigma$ and let ${\cal E}_0 \subseteq {\cal E}$ be
a smallest deductively closed subset such that $\Gamma_0 \subseteq
{\cal E}_0$. For each $\varphi \in \Gamma_0$ we can construct an
argument $A_\varphi = \{ \langle \emptyset, \varphi \rangle \}$.
Furthermore, by Lemma \ref{l3}, we can construct an argument for
each $\varphi \in {\cal E}_0$. Clearly, $\tilde{A}_\varphi \cap
\Omega = \emptyset$.

Proceeding inductively, let ${\cal E}_i \subseteq {\cal E}$ be a
smallest deductively closed subset such that $\Gamma_i \subseteq
{\cal E}_i$. Suppose that ${\cal E}_i \subset {\cal E}$. Then
there is a $\varphi \in ({\cal E} \backslash {\cal E}_i)$ such that either
$\varphi = \neg (\alpha \leadsto \beta)$ and $\alpha \leadsto
\beta$ is defeated given ${\cal E}_i$, or $\alpha \in {\cal E}_i$,
$\alpha \leadsto \varphi \in D$ and $\neg (\alpha \leadsto
\varphi) \not\in {\cal E}$, or neither of these two possibilities.

In the third case, $\Gamma({\cal E})$ is not a minimal set. Hence, 
this case is impossible.

In the first case there is a $\Delta \subseteq \{ \eta \leadsto
\mu \in D \mid \{ \eta,\mu \} \subseteq {\cal E}_i \}$ that
defeats $\alpha \leadsto \beta$. Hence we can construct an
argument $A_{\neg(\alpha \leadsto \beta)}$ for $\varphi$ such that
$\tilde{A}_{\neg(\alpha \leadsto \beta)} \cap \Omega = \emptyset$.

In the second case $A_\varphi = \{ \langle A_\alpha, \alpha
\leadsto \varphi \rangle \}$ is an argument for $\varphi$ such
that $\tilde{A}_\varphi \cap \Omega = \emptyset$.

Let ${\cal E}_{i+1}$ be the deductive closure of $\Gamma_{i+1} =
\Gamma_i \cup \{ \varphi \}$. According to Lemma \ref{l3}, for
every proposition in ${\cal E}_{i+1}$ we can construct an argument
$A$ such that $\tilde{A} \cap \Omega = \emptyset$.

Hence, for every proposition in ${\cal E}$ we can construct an
argument $A$ such that $\tilde{A} \cap \Omega = \emptyset$. Given
these arguments, there holds according to Lemma \ref{l4} that
$\Omega = \{ \alpha \leadsto \beta \mid \neg (\alpha \leadsto
\beta) \in {\cal E} \}$ satisfies Definition \ref{defeat}, i.e.
$\Omega = \textsl{Defeat}(\Omega)$. Hence, the arguments for the
propositions in ${\cal E}$ are valid arguments.

\vspace{2mm}

Now suppose that we can construct a valid argument $A_\varphi$ for
a proposition $\varphi \not\in {\cal E}$, i.e., $\tilde{A}_\varphi \cap 
\Omega = \emptyset$. Since $\bar{A}_\varphi
\subseteq \Sigma$, either for some rule $\alpha \leadsto \beta \in
\tilde{A}_\varphi$ there holds: $\alpha \in {\cal E}$ and $\beta
\not\in {\cal E}$. So, $\alpha \leadsto \beta \in \Omega$.
Contradiction.

Hence, ${\cal E}$ is an extension according to Definition
\ref{argextend}.
\end{proof}

\begin{theorem}
Let $\varphi \leadsto \psi$ and $\eta \leadsto \mu$ be two rules.

If $\varphi \leadsto \psi$ is more specific than $\eta \leadsto
\mu$ according to Definition \ref{spec}, then there are two
Poole-arguments $\langle D_1, \psi \rangle$ and $\langle D_2, \mu
\rangle$ with $\varphi \leadsto \psi \in D_1$ and $\eta \leadsto
\mu \in D_2$ for which there hold that $\langle D_1, \psi \rangle$
is more specific than $\langle D_2, \mu \rangle$.
\end{theorem}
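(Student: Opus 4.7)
My plan is to read off Poole-arguments directly from the witness argument $A_\eta$ supplied by Definition \ref{spec}. Write $R = \tilde{A}_\eta$; the propositional reading of $A_\eta$ then yields $\{\varphi\} \cup K \cup R \models \eta$. I would set $D_1 = R \cup \{\varphi \leadsto \psi\}$ and, crucially, $D_2 = D_1 \cup \{\eta \leadsto \mu\}$, and work in a Poole-style translation of the theory in which $\varphi$ lies in $F_c$ and $K$ lies in $F_n$. The theorem's rule-membership requirements $\varphi \leadsto \psi \in D_1$ and $\eta \leadsto \mu \in D_2$ are then immediate.

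Verifying that both pairs are Poole-arguments is direct. For $\langle D_1, \psi \rangle$, modus ponens on the contingent fact $\varphi$ and the material implication $\varphi \to \psi \in D_1$ gives $F_c \cup D_1 \cup F_n \models \psi$. For $\langle D_2, \mu \rangle$, the propositional reading of $A_\eta$ combined with $R \subseteq D_2$, $K \subseteq F_n$ and $\varphi \in F_c$ gives $F_c \cup D_2 \cup F_n \models \eta$; appending $\eta \to \mu \in D_2$ then yields $F_c \cup D_2 \cup F_n \models \mu$. This is the one place where Definition \ref{spec} is used essentially.

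For Poole's more-specific condition, the inclusion $D_1 \subseteq D_2$ does all the work. For any possible-facts set $F_p$, monotonicity of $\models$ gives $F_p \cup D_1 \cup F_n \models \psi \;\Rightarrow\; F_p \cup D_2 \cup F_n \models \psi$, so the two hypotheses $F_p \cup D_1 \cup F_n \models \psi$ and $F_p \cup D_2 \cup F_n \not\models \psi$ of Poole's definition cannot hold simultaneously, and the implication is vacuously satisfied. The step that requires the most care is precisely the choice of $D_2$: the seemingly more natural candidate $D_2 = R \cup \{\eta \leadsto \mu\}$ forces a model-theoretic case analysis on models $M$ of $F_p \cup D_2 \cup F_n$ that gets stuck in the case $M \models \psi \wedge \neg \varphi$, where $M$ need not satisfy $\mu$; closing that case appears to demand a latent conflict assumption such as $\psi \vdash \neg \mu$. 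Enlarging $D_2$ to contain $D_1$ sidesteps this obstacle entirely, which is why I would present the proof with this specific construction.
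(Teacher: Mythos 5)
Your construction does not establish what the theorem is meant to establish, and the problem is concentrated exactly where you describe your proof as ``sidestepping'' an obstacle. First, a Poole-argument $\langle D, \alpha \rangle$ is described in the paper as a set of rules \emph{needed} to derive $\alpha$; padding $D_1$ with the rules $R = \tilde{A}_\eta$ of the argument for $\eta$ (which play no role in deriving $\psi$), and padding $D_2$ with $\varphi \leadsto \psi$ (which plays no role in deriving $\mu$), yields sets that are not Poole-arguments for $\psi$ and $\mu$ in the intended minimal sense. Second, and more seriously, your verification of Poole's more-specific condition is purely vacuous: because you arrange $D_1 \subseteq D_2$, the antecedent ``$F_p \cup D_1 \cup F_n \models \psi$ and $F_p \cup D_2 \cup F_n \not\models \psi$'' can never be satisfied, by monotonicity of $\models$. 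That antecedent is precisely Poole's non-triviality test, and making it unsatisfiable means $\langle D_1, \psi \rangle$ comes out ``more specific'' than $\langle D_2, \mu \rangle$ for degenerate reasons. The same trick would ``prove'' the theorem for \emph{any} pair of rules admitting Poole-arguments at all -- Definition \ref{spec} is used only to guarantee that $A_\eta$ exists -- so the construction cannot witness the intended correspondence between the syntactic specificity of Definition \ref{spec} and Poole's semantic notion.

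The paper instead takes the non-nested choice $D_1 = \{ \varphi \leadsto \psi \}$ and $D_2 = \tilde{A}_\eta \cup \{ \eta \leadsto \mu \}$ and does the substantive work: it distinguishes the cases $\bar{A}_\eta = \emptyset$ and $\bar{A}_\eta = \{ \varphi \}$, and in the latter argues that any $F_p$ satisfying the antecedent of Poole's condition must imply $\varphi$ -- since $\{ \varphi \leadsto \psi \}$ alone can only produce $\psi$ via $\varphi$ once the second conjunct excludes the $F_p$'s that deliver $\psi$ without it -- whereupon the argument $A_\eta$, with $\bar{A}_\eta \subseteq \{ \varphi \} \cup K$ and $\tilde{A}_\eta \subseteq D_2$, yields $\eta$ and hence $\mu$. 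You correctly located the hard case ($M \models \psi \wedge \neg \varphi$) that this route must confront; the paper closes it by appeal to the second conjunct of the antecedent rather than by a latent conflict assumption. Enlarging $D_2$ to contain $D_1$ evades that case rather than closing it. You should redo the argument with the paper's choice of $D_1$ and $D_2$ and prove the implication non-vacuously.
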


\begin{proof}
We must prove that for every set of possible facts $F_p$ there
must hold: if $F_p \cup D_1 \cup F_n \models \psi$ and $F_p \cup
D_2 \cup F_n \not\models \psi$, then $F_p \cup D_2 \cup F_n
\models \mu$. Let $D_2 = \tilde{A}_\eta \cup \{ \eta \leadsto \mu
\}$ and $D_1 = \{ \varphi \leadsto \psi \}$.

Since $\varphi \leadsto \psi$ is more specific than $\eta \leadsto
\mu$, given the premise $\{ \varphi \}$ there must exist an
argument $A_\eta$ for $\eta$.

Suppose that $\bar{A}_\eta = \emptyset$. Then $\langle D_1, \psi
\rangle$ is more specific than $\langle D_2, \mu \rangle$.

Suppose that $\bar{A}_\eta = \{ \varphi \}$. Then, any possible
fact $F_p$ for which the antecedent of Poole's definition holds,
must imply $\varphi$. Hence, $\langle D_1, \psi \rangle$ is more
specific than $\langle D_2, \mu \rangle$.
\end{proof}

\begin{theorem}
The defeasible theory $\langle \Sigma, D, \succ \rangle$
satisfies: \\ {\em Reflexivity, Deduction, Cut} and, in the absence of odd loops, {\em Cautious Monotony}.

\noindent
An \emph{odd loop} is an odd number of arguments $A_1,\ldots,A_n$ where every $A_{i+1}$ defeats a rule in $\tilde{A}_i$, and $A_1$ defeats a rule in $\tilde{A}_n$.
\end{theorem}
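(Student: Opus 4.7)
The plan is to verify each property separately, relying on the characterization of extensions as sets $\mathcal{E}(\Omega) = \{\varphi \mid A_\varphi \in \mathcal{A}, \tilde{A}_\varphi \cap \Omega = \emptyset\}$ determined by fixed points $\Omega$ of \textsl{Defeat}. \emph{Reflexivity} is immediate: if $\varphi \in \Sigma$, then $A = \{\langle \emptyset, \varphi\rangle\}$ is an argument for $\varphi$ with $\tilde{A} = \emptyset$, so $A$ is valid for every $\Omega$ and therefore $\varphi$ belongs to every extension. \emph{Deduction} follows analogously: given $\Sigma \vdash \varphi$, applying the second clause of Definition \ref{argument} to the trivial arguments $\{\langle \emptyset, \sigma\rangle\}$ for $\sigma \in \Sigma$ yields an argument $A$ for $\varphi$ with $\tilde{A} = \emptyset$, which is valid in every extension.

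The engine for \emph{Cut} and \emph{Cautious Monotony} is a \textbf{premise-swap lemma}: given any argument $A' \in \mathcal{A}_{\Sigma \cup \{\varphi\}}$ that uses $\langle \emptyset, \varphi\rangle$, if $A_\varphi \in \mathcal{A}_\Sigma$ is any argument for $\varphi$ in the original theory, then replacing each occurrence of $\langle \emptyset, \varphi\rangle$ in $A'$ by (the top-level elements of) $A_\varphi$ yields an argument $A'' \in \mathcal{A}_\Sigma$ with $\tilde{A}'' \subseteq \tilde{A}' \cup \tilde{A}_\varphi$. Conversely, every argument in $\mathcal{A}_\Sigma$ is trivially an argument in $\mathcal{A}_{\Sigma \cup \{\varphi\}}$. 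From this I would derive the key correspondence: whenever $\Omega$ is a fixed point of $\textsl{Defeat}_\Sigma$ whose associated extension contains $\varphi$ (so some $A_\varphi$ has $\tilde{A}_\varphi \cap \Omega = \emptyset$), one has $\textsl{Defeat}_\Sigma(\Omega) = \textsl{Defeat}_{\Sigma \cup \{\varphi\}}(\Omega)$ and the two extensions coincide on propositional content.

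With this in hand, \emph{Cut} proceeds as follows. Assume $\Sigma \nm \varphi$ and $\Sigma \cup \{\varphi\} \nm \psi$. Pick any extension $\mathcal{E}_\Sigma(\Omega)$ of $\langle \Sigma, D, \succ\rangle$; by assumption $\varphi \in \mathcal{E}_\Sigma(\Omega)$, so the premise-swap correspondence gives $\mathcal{E}_\Sigma(\Omega) = \mathcal{E}_{\Sigma \cup \{\varphi\}}(\Omega)$ and the latter is an extension of $\langle \Sigma \cup \{\varphi\}, D, \succ\rangle$. Since $\psi$ belongs to every such extension, $\psi \in \mathcal{E}_\Sigma(\Omega)$. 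As $\Omega$ was arbitrary, $\Sigma \nm \psi$.

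For \emph{Cautious Monotony}, the direction of the correspondence I need is the harder one: given a fixed point $\Omega$ of $\textsl{Defeat}_{\Sigma \cup \{\varphi\}}$, I must show $\Omega$ is also a fixed point of $\textsl{Defeat}_\Sigma$, which via the lifting reduces to exhibiting an argument $A_\varphi \in \mathcal{A}_\Sigma$ with $\tilde{A}_\varphi \cap \Omega = \emptyset$. This is precisely where the obstacle lies: a fixed point for $\Sigma \cup \{\varphi\}$ may exploit the ``free'' premise $\langle \emptyset, \varphi\rangle$ to undercut rules that any genuine derivation of $\varphi$ in $\Sigma$ would depend upon, creating a circularity of defeat that has no counterpart in $\Sigma$. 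The odd-loop-freeness hypothesis is designed to block exactly this phenomenon; my plan is to show, by induction along the well-founded defeat ordering guaranteed by the absence of odd loops, that for every $\varphi \in \mathcal{E}_{\Sigma \cup \{\varphi\}}(\Omega)$ an $\Omega$-valid $\mathcal{A}_\Sigma$-argument for $\varphi$ can be constructed (using $\Sigma \nm \varphi$ to seed the construction, and the status-assignment characterization of Proposition 2 to propagate it through the defeat graph). Once the correspondence is restored in both directions, $\Sigma \nm \psi$ transfers to $\Sigma \cup \{\varphi\} \nm \psi$ exactly as in Cut. The bulk of the technical effort will be in this inductive construction and in verifying that odd-loop-freeness indeed suffices to carry it out.
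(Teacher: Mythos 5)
Your treatment of Reflexivity, Deduction and Cut follows the paper's route: rule-free arguments for the first two, and premise replacement ($\langle\emptyset,\varphi\rangle \mapsto A_\varphi$) for Cut. One caution on the premise-swap lemma: tracking only $\tilde{A}'' \subseteq \tilde{A}' \cup \tilde{A}_\varphi$ is not enough, because the undercutting arguments of Definition \ref{defeat-arg} are built from the \emph{last} rules $\vec{A}_\perp$ and the preference $\succ$; when $\langle\emptyset,\varphi\rangle$ sits at the top level of $A_\perp$ (i.e.\ is itself a disagreeing argument), the replacement changes $\vec{A}_\perp$ and hence possibly which rule counts as least preferred. The paper's proof of Cut is precisely a three-case analysis whose third case handles this situation, and you would need to supply that argument for your ``key correspondence'' to go through.

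The genuine gap is in Cautious Monotony. You propose an ``induction along the well-founded defeat ordering guaranteed by the absence of odd loops,'' but odd-loop-freeness does not make the defeat relation well-founded: even loops of mutual defeat (as in the paper's own example with $\gamma\leadsto\neg\delta$ and $\delta\leadsto\neg\gamma$, which yields two extensions) are explicitly permitted, and along such a cycle your induction has no base case, so the $\Omega$-valid $\mathcal{A}_\Sigma$-argument for $\varphi$ cannot be ``seeded and propagated'' as described. The paper instead argues by contradiction: using Cut and Proposition \ref{unique} it shows that the defeated-rule set $\Omega$ of a genuinely new extension of $\langle\Sigma\cup\{\varphi\},D,\succ\rangle$ must meet $\tilde{A}_\varphi$ for every argument $A_\varphi$; it then observes that the witnessing undercutter $A_{\neg(\alpha\leadsto\beta)}$ is valid under $\Omega$ while its premise-swapped version $A^*_{\neg(\alpha\leadsto\beta)}$ is valid in no extension of $\langle\Sigma,D,\succ\rangle$, and that this discrepancy forces the chain of defeat dependencies through $\varphi$ to close into an odd loop, contradicting the hypothesis. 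You have correctly located where odd-loop-freeness must enter, but the mechanism you propose for exploiting it would fail; the construction of an odd loop from the dependency chain is the missing idea.
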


\begin{proof}
{\em Reflexivity}. For each $\varphi \in \Sigma$, $A= \{ \langle
\emptyset, \varphi \rangle \}$ is an argument for $\varphi$. Since
$A$ contains no rule, it cannot be defeated. Therefore, $\varphi
\in B$.

{\em Deduction}. For each $\varphi$ such that $\Sigma \vdash
\varphi$, $A= \{ \langle \emptyset, \psi \rangle \mid \psi \in
\Sigma \}$ is an argument for $\varphi$. Since $A$ contains no
rules, it cannot be defeated. Therefore, $\varphi \in B$.

{\em Cut}. Let $\cal E$ be an
extension of the defeasible theory $\langle \Sigma, D, \succ
\rangle$, let $B$ be the belief set of $\langle \Sigma, D, \succ
\rangle$, and let $\varphi \in B$ 

Suppose that $\cal E$ is no longer an extension after
adding some $\varphi \in B$ to $\Sigma$. Let $\Omega$ be
the set of defeated rules that correspond with the extension $\cal
E$. Then after adding $\varphi$ there must be a new argument $A_{\neg
(\alpha \leadsto \beta)}$ such that $\tilde{A}_{\neg (\alpha
\leadsto \beta)} \cap \Omega = \emptyset$ and $\alpha \leadsto
\beta \not\in \Omega$. Since $\tilde{A}_{\neg (\alpha \leadsto
\beta)} \cap \Omega = \emptyset$ and $\alpha \leadsto \beta
\not\in \Omega$, $\{ \langle \emptyset, \varphi \rangle \}$ must
be a sub-argument of $A_{\neg (\alpha \leadsto \beta)}$

Now three situations are possible.
\begin{itemize}
\item
$A_{\neg (\alpha \leadsto \beta)} = \{ \langle A_\xi, \xi \leadsto
\neg (\alpha \leadsto \beta) \rangle \}$. Then there is an
$A^*_{\neg (\alpha \leadsto \beta)}$ in which $\{ \langle
\emptyset, \varphi \rangle \}$ is replaced by $A_\varphi$. Since
$A_\varphi$ is valid; i.e.\ $\tilde{A}_\varphi \cap \Omega =
\emptyset$, there holds that $\alpha \leadsto \beta \in \Omega$.
Contradiction.
\item
$A_{\neg (\alpha \leadsto \beta)}$ is derived from $A_\perp$ and
$\{ \langle \emptyset, \varphi \rangle \}$ is not a disagreeing
argument. Since $\{ \langle \emptyset, \varphi \rangle \}$ is a
sub-argument of $A_\perp$, there is an $A^*_\perp$ in which $\{
\langle \emptyset, \varphi \rangle \}$ is replaced by $A_\varphi$.
Clearly, $\vec{A}_\perp = \vec{A}^*_\perp$. Hence, since
$A_\varphi$ is valid, $\alpha \leadsto \beta \in \Omega$.
Contradiction.
\item
$A_{\neg (\alpha \leadsto \beta)}$ is derived from $A_\perp$ and
$\{ \langle \emptyset, \varphi \rangle \}$ is a disagreeing
argument. Then there is an $A^*_\perp$ in which $\{ \langle
\emptyset, \varphi \rangle \}$ is replaced by $A_\varphi$. Hence,
$A_\varphi \subseteq A^*_\perp$. Since $A_\varphi$ is valid, for no
$\eta \leadsto \mu \in \vec{A}^*_\perp$: $\alpha \leadsto \beta
\succ \eta \leadsto \mu$. Therefore, there is an $A^*_{\neg (\alpha
\leadsto \beta)} = (A^*_\perp \backslash \{ \langle A_\alpha, \alpha
\leadsto \beta \rangle \}) \cup A_\alpha$ and $A^*_{\neg (\alpha
\leadsto \beta)} \cap \Omega = \emptyset$. Hence, $\alpha \leadsto
\beta \in \Omega$. Contradiction.
\end{itemize}

{\em Cautious Monotonicity}.
Let $\langle \Sigma, D, \succ \rangle$ be a defeasible theory, and let $B$ be the belief set of $\langle \Sigma, D, \succ \rangle$.

Suppose that $\cal E$ is an extension of the defeasible theory $\langle \Sigma \cup \{\varphi\}, D, \succ \rangle$ for some $\varphi \in B$, but not of $\langle \Sigma, D, \succ \rangle$. Let $\Omega$ be the set of defeasible rules determining the extension $\mathcal{E}$. Every extension $\mathcal{E}'$ of $\langle \Sigma, D, \succ \rangle$ determined by the defeasible rules $\Lambda$, is also an extension of the defeasible theory $\langle \Sigma \cup \{\varphi\}, D, \succ \rangle$ according to the property \emph{Cut}. Therefore, $\Lambda \not\subseteq \Omega$ and $\Omega \not\subseteq \Lambda$ according to Proposition \ref{unique}.

Consider an extension $\mathcal{E}'$ of the defeasible theory $\langle \Sigma, D, \succ \rangle$ determined by the defeasible rules $\Lambda$. Since $\varphi \in B$, there is an argument $A_\varphi$ generated by $\langle \Sigma, D, \succ \rangle$ such that $\tilde{A}_\varphi \cap \Lambda = \emptyset$. 

Every argument determined by the defeasible theory $\langle \Sigma, D, \succ \rangle$ is also an argument of the defeasible theory $\langle \Sigma \cup \{\varphi\}, D, \succ \rangle$. Moreover, every argument $A_\psi$ determined by the defeasible theory $\langle \Sigma \cup \{\varphi\}, D, \succ \rangle$ is either an argument of the defeasible theory $\langle \Sigma, D, \succ \rangle$, or contains $\{ \langle \emptyset, \varphi \rangle \}$ as a sub-argument. If we replace every sub-argument $\{ \langle \emptyset, \varphi \rangle \}$ in $A_\psi$ by $A_\varphi$, denoted by $A^*_\psi$, then we get an argument of the defeasible theory $\langle \Sigma, D, \succ \rangle$.

Consider the above mentioned extension $\mathcal{E}$ of $\langle \Sigma \cup \{\varphi\}, D, \succ \rangle$ determined by the defeated rules $\Omega$. Clearly, $\tilde{A}_\varphi \cap \Omega \not= \emptyset$ otherwise $\mathcal{E}$ would als be an extension of $\langle \Sigma, D, \succ \rangle$. 
Therefore, there is a defeasible rule $\alpha \leadsto \beta \in (\tilde{A}_\varphi \cap \Omega)$ and a corresponding argument $A_{\neg(\alpha \leadsto \beta)}$ of the defeasible theory $\langle \Sigma \cup \{\varphi\}, D, \succ \rangle$. In no extension $\mathcal{E}'$, $A^*_{\neg(\alpha \leadsto \beta)}$ is a valid argument. This is only possible if the validity of $A_{\neg(\alpha \leadsto \beta)}$ given $\Omega$ depends, directly or indirectly through arguments defeating other arguments, on $\varphi$. So, $A_{\neg(\alpha \leadsto \beta)}$ depends, directly or indirectly, on an argument that has $\{ \langle \emptyset, \varphi \rangle \}$ as a sub-argument. Hence, if we replace all arguments $A$ on which $A_{\neg(\alpha \leadsto \beta)}$ depends by $A^*$, then $A^*_{\neg(\alpha \leadsto \beta)}$ is part of an odd loop. This contradicts the condition of the theorem.
\end{proof}

\begin{theorem}
The defeasible theory $\langle \Sigma, D, \succ \rangle$ satisfies
{\em Exclusive Or}: \\
if $\Sigma \cup \{\varphi \wedge \neg\psi\} \nm \eta$, 
$\Sigma \cup \{\neg\varphi \wedge \psi\} \nm \eta$, then $\Sigma
\cup \{\varphi \xor \psi \} \nm \eta$;
\end{theorem}
\begin{proof}
Let $r_1 = \varphi \vee \psi \leadsto \varphi \wedge \neg \psi $,
$r_2 = \varphi \vee \psi \leadsto \varphi \wedge \psi$ and $r_3 =
\varphi \vee \psi \leadsto \neg \varphi \wedge \psi$.

To proof the theorem, we must prove that for every extension $\cal
E$ of the defeasible theory $\langle \Sigma \cup \{ \varphi \vee
\psi \}, D, \succ \rangle$, either that ${\cal E}$ is an extension
of $\langle \Sigma \cup \{\varphi \wedge \neg\psi\}, D, \succ \rangle$ or that
${\cal E}$ is an extension of $\langle \Sigma \cup \{\neg\varphi \wedge \psi\}, D,
\succ \rangle$. Since for every extension ${\cal E}'$ of
$\langle \Sigma \cup \{\varphi \wedge \neg\psi\}, D, \succ \rangle$ and of
$\langle \Sigma \cup \{\neg\varphi \wedge \psi\}, D, \succ \rangle$, $\eta \in {\cal
E}'$ holds, and since $ B = \bigcap_{i} {\cal E}_i$, $\Sigma
\cup \{\varphi \vee \psi\} \nm \eta$.

Let ${\cal E}$ be an extension of $\langle \Sigma \cup \{\varphi
\vee \psi\}, D, \succ \rangle$. Then because of the set of
hypotheses $H$, $\varphi \wedge \neg \psi \in {\cal E}$ or $\neg
\varphi \wedge \psi \in {\cal E}$. Notice that for no $\cal E$,
$\varphi \wedge \psi \in {\cal E}$ unless $\Sigma$ is
inconsistent.

Suppose that $\varphi \wedge \neg \psi \in {\cal E}$. Then $\Omega
= \{ \alpha \leadsto \beta \mid \neg(\alpha \leadsto \beta) \in
{\cal E} \}$. To prove that ${\cal E}$ is an extension of $\langle
\Sigma \cup \{\varphi \wedge \neg\psi\}, D, \succ \rangle$, we have to prove that
$\alpha \in {\cal E}$ if and only if there is an argument
$A^*_\alpha$ such that $A^*_\alpha \cap \Omega = \emptyset$ given
$\langle \Sigma \cup \{\varphi \wedge \neg\psi\}, D, \succ \rangle$.

Let $\alpha \in {\cal E}$. Then there is an $A_\alpha$ given
$\langle \Sigma \cup \{ \varphi \vee \psi \}, D, \succ \rangle$
with $\tilde{A}_\alpha \cap \Omega = \emptyset$. Therefore, we can
construct an $A^*_\alpha$ given $\langle \Sigma \cup \{\varphi \},
D, \succ \rangle$ such that $A^*_\alpha \cap \Omega = \emptyset$ by
first replacing each sub-argument $\{ \langle \{ \langle \emptyset,
\varphi \vee \psi \rangle \}, r_1 \rangle \}$ in $A_\alpha$ by $\{
\langle \emptyset, \varphi \rangle \}$ and subsequently by replacing each
remaining sub-argument $\{ \langle \emptyset, \varphi \vee \psi
\rangle \}$ also by $\{ \langle \emptyset, \varphi \rangle \}$.
Hence, there is an argument $A^*_\alpha$ given $\langle \Sigma \cup
\{\varphi \wedge \neg\psi\}, D, \succ \rangle$ with 
$\tilde{A}^*_\alpha \cap \Omega = \emptyset$.

Let $A^*_\alpha$ be an argument given $\langle \Sigma \cup
\{\varphi \wedge \neg\psi\}, D, \succ \rangle$ with $\tilde{A}^*_\alpha \cap \Omega
= \emptyset$. Then we can construct an $A_\alpha$ given $\langle
\Sigma \cup \{\varphi \vee \psi\}, D, \succ \rangle$ by replacing
each sub-argument $\{ \langle \emptyset, \varphi \rangle \}$ in
$A^*_\alpha$ by $\{ \langle \{ \langle \emptyset, \varphi \vee \psi
\rangle \}, r_1 \rangle \}$. To make sure that $A_\alpha \cap
\Omega = \emptyset$, we must make sure that $r_1$ is not defeated.
If $r_1$ is defeated, there must be a valid argument for $\neg
(\varphi \wedge \neg \psi)$. Since $\varphi \wedge \neg \psi \in
{\cal E}$, there is no such argument.

\vspace{2mm}

In case $\neg \varphi \wedge \psi \in {\cal E}$, the proof is
similar to the one given above.
\end{proof}

\section*{Appendix B}
Associate with each node and with each justification of the JTMS a
counter. Initially, set the counter of a node equal to the number
of incoming justifications and the counter of each justification
equal to the number of the of out-nodes of the justification.
Determine all the nodes that have a justification with an empty
set of out-nodes. Label these nodes {\sc in}, and place them on
the in-list. Next execute {\em propagate}.

\begin{tabbing}
    mmm\=mmm\=mmm\=mmm\=mmm\=mmm\=mmm\=mmm\= \kill
    {\em propagate}: \\
    \> {\bf for} each node on the in-list \\
    \> and for each out-going justification {\bf do} \\
    \>\> decrement the counter of its consequent node;  \\
    \>\> remove the justification;  \\
    \>\> {\bf if} the counter of the node is equal to 0 {\bf then} \\
    \>\>\> label the node OUT;  \\
    \>\>\> place the node on the out list;  \\
    \>\> {\bf end} \\
    \> {\bf end} \\
    \> delete the in-list;  \\
    \> {\bf for} each node on the out-list \\
    \> and for each out-going justification {\bf do} \\
    \>\> decrement the counter of the justification;  \\
    \>\> {\bf if} the counter of the justification is equal to 0 {\bf then} \\
    \>\>\> label its consequent node IN;  \\
    \>\>\> place its consequent node on the in-list;  \\
    \>\> {\bf end} \\
    \> {\bf end};  \\
    \> delete the out-list;  \\
    \> {\bf if} the in-list is not empty {\bf then} \\
    \>\> {\bf repeat} {\em propagate};  \\
    \> {\bf end} \\
    {\bf end}
\end{tabbing}

The above described procedure need not result in a complete
labeling of the JTMS. When this is the case, more than one labeling exist. To create a complete labeling, we must
choose one of the unlabeled nodes, a node that is not labeled {\sc
in}, {\sc out} or {\sc undetermined}, and label in {\sc in} or
{\sc out}. If we label the node {\sc in}, we place the node on the
in-list, if we label it {\sc out,} we place it on the out-list.
Subsequently, we must execute the procedure {\em propagate}.

We repeat the selection of a node, giving it a label and
propagating the consequences, till all nodes are labeled. By
backtracking on the choices that are made, we determine every
labeling of the JTMS.

\section*{\rm Acknowledgment}
I thank the reviewers and Cees Witteveen for their comments which
helped me to improve the paper.

%%%%%%%%%%%%%%%%%%%%%%%%%%%%%%%%%%%%%%%%%%%%%%%%%%%%%%%%%%%%%%%%%%%%%%%%
%\newpage
%
%\begin{figure}[htb]
% \centering{$\Sigma = \{ a \vee b \}$,
%$D = \{ r1: b \leadsto (d \vee e), r2: b \leadsto c, r3: c
%\leadsto (\neg d \wedge \neg e) \}$, $\succ = \{ r1, r3 \}$}
% \\ \ \\
%
%\centering{\epsffile{pic1.eps}}
% \vspace{-1cm}
% \caption{} \label{f1}
%\end{figure}
%
%%%%%%%%%%%%%%%%%%%%%%%%%%%%%%%%%%%%%%%%%%%%%%%%%%%%%%%%%%%%%%%%%%%%%%%%
%\newpage
%
%\begin{figure}[htb]
% \centering{\epsffile{pic2.eps}}
% \vspace{-1cm}
% \caption{} \label{f2}
%\end{figure}
%
%%%%%%%%%%%%%%%%%%%%%%%%%%%%%%%%%%%%%%%%%%%%%%%%%%%%%%%%%%%%%%%%%%%%%%%%


\begin{thebibliography}{9}

\bibitem{Bro-97}
{\sc Bondarenko}, A., P. M. {\sc Dung}, R. A. {\sc Kowalski},
and F. {\sc Toni}. 1997.
An abstract, argumentation-theoretic approach to default reasoning.
Artificial Intelligence {\bf 93}:63--101.

%\bibitem{Bre-89}
%{\sc Brewka}, G. 1989. Preferred subtheories: an extended logical
%framework for default reasoning. International Joined Conference
%on Artificial Intelligence,  1043--1048.

%\bibitem{Bre-91}
%{\sc Brewka}, G. 1991a.
%Cumulative default logic: in defense of nonmonotonic inference rules.
%Artificial Intelligence {\bf 50}:183--205.

%\bibitem{Bre-91a}
%{\sc Brewka}, G. 1991b.
%Nonmonotonic reasoning: logical foundations of commonsense,
%Cambridge University Press.

\bibitem{Bre-94}
{\sc Brewka}, G. 1994. Reasoning about priorities in default
logic. Proceedings of the Twelfth National Conference on
Artificial Intelligence, 940--945.

\bibitem{Cay-95}
{\sc Cayrol}, C. 1995. On the relation between argumentation and
non-monotonic coherence-based  entailment. Proceedings of the
fourteenth International Joint Conference on Artificial
Intelligence, 1443--1448.

\bibitem{Doy-79}
{\sc Doyle}, J. 1979.
A truth maintenance system.
Artificial Intelligence, {\bf 12}:231--272.

\bibitem{Dun-93}
{\sc Dung}, P. M. 1993. The acceptability of arguments and its
fundamental role in nonmonotonic reasoning and logic programming.
Proceedings of the thirteenth International Joined Conference on
Artificial Intelligence, 852--857.

\bibitem{Dun-95}
{\sc Dung}, P. M. 1995. On the acceptability of arguments and its
fundamental role in nonmonotonic reasoning, logic programming and
{\it n}-person games. Artificial Intelligence, {\bf 77}:321--357.

\bibitem{Fox-92}
{\sc Fox}, J., P. {\sc Krause} and S. {\sc Ambler}. 1992.
Arguments, contradictions and practical reasoning.
 Proceedings of the tenth European Conference on Artificial
 Intelligence, 623--627.

\bibitem{Gab-85}
{\sc Gabbay}, D. M. 1985.
 Theoretical foundations of non-monotonic reasoning in expert systems.
 K. R. {\sc Apt} (editor).
 Logic and models of concurrent systems, Springer Verlag, 439--457.

\bibitem{Gef-92}
{\sc Geffner}, H. and J. {\sc Pearl}. 1992.
Conditional entailment: bridging two approaches to default reasoning.
Artificial Intelligence, {\bf 53}:209--244.

\bibitem{Gef-94}
{\sc Geffner}, H. 1994. Causal default reasoning, principles and
algorithms, Proceedings of the twelfth National Conference on
Artificial Intelligence, 245--250.

\bibitem{Goo-87}
{\sc Goodwin}, J. W. 1987. A theory and system for non-monotonic
reasoning. Department of Computer and Information Science,
Link\"{o}ping University, Link\"{o}ping, Sweden.

\bibitem{Hun-94}
{\sc Hunter} A. 1994.
 Defeasible reasoning with structured information.
 Proceedings of the fourth International Conference on
Principles of Knowledge Representation and Reasoning, 281--292.

\bibitem{Kra-95}
{\sc Krause}, P., S. {\sc Ambler}, M. {\sc Elvang-G{\o}ransson},
J. {\sc Fox}. 1995. A logic of argumentation for reasoning under
uncertainty. Computational Intelligence, {\bf 11}:113--131.

\bibitem{Kra-90}
{\sc Kraus}, S., D. {\sc Lehmann}, M. {\sc Magidor}. 1990.
Nonmonotonic reasoning, preferential models and cumulative logics.
Artificial Intelligence, {\bf 44}:167--207.

\bibitem{Lin-89}
{\sc Lin}, F. and Y. {\sc Shoham} 1989. Argument Systems: a
uniform basis for nonmonotonic reasoning. Proceedings of the first
International Conference on Principles of Knowledge Representation
and Reasoning, 245--255.

\bibitem{Lou-87}
{\sc Loui}, R. P. 1987.
Defeat among arguments: a system of defeasible inference.
Computational Intelligence, {\bf 3}:100--106.

\bibitem{Lou-98}
{\sc Loui}, R. P. 1998. Process and policy: resource-bounded
nondemonstrative reasoning. Computational Intelligence, {\bf
14}:1--38.

\bibitem{Mak-88}
{\sc Makinson}, D. 1988.
 General Theory of cumulative inference.
 M. Reinfrank, J. de Kleer, M. L. Ginsberg and E. Sandewall
(editors).
 Proceedings of the second International Workshop on
 Non-monotonic Reasoning, Springer-Verlag, 1--18.

\bibitem{Nut-88}
{\sc Nute}, D. 1988. Defeasible reasoning and decision support
systems. Decision Support Systems, {\bf 4}:97--110.

\bibitem{Nut-94}
{\sc Nute}, D. 1994.
 A decidable quantified defeasible logic.
 D. {\sc Prawitz}, B. {\sc Skyrms} and D. {\sc Westerst{\aa}hl} (editors).
 Logic, Methodology And Philosophy of Science IX. Elsevier Science,
97--110.

%\bibitem{Pea-89}
%{\sc Pearl}, J. 1989.
%Probabilistic semantics for nonmonotonic reasoning: a survey.
%Proceedings of the 1st International Conference on
% Principles of Knowledge Representation and Reasoning, 505--516.

\bibitem{Pol-87}
{\sc Pollock}, J. L. 1987.
Defeasible reasoning.
Cognitive Science, {\bf 11}:481--518.

\bibitem{Pol-92}
{\sc Pollock}, J. L. 1992.
How to reason defeasibly.
Artificial Intelligence, {\bf 57}:1--42.

\bibitem{Pol-94}
{\sc Pollock}, J. L. 1994.
Justification and defeat.
Artificial Intelligence, {\bf 67}:377--407.

\bibitem{Poo-85}
{\sc Poole}, D. 1985. On the comparison of theories: preferring
the most specific explanation. Proceedings of the ninth
International Joint Conference on Artificial Intelligence,
144--147.

\bibitem{Poo-88}
{\sc Poole}, D. 1988.
A logical framework for default reasoning.
Artificial Intelligence, {\bf 36}:27--47.

\bibitem{Pra-93}
{\sc Prakken}, H. 1993.
 Logical tools for modelling legal arguments.
 Ph.D. thesis, Vrije Universiteit, Amsterdam, The Netherlands.

\bibitem{Pra-99}
{\sc Prakken}, H., G. {\sc Vreeswijk}. 2000.
 Logics for defeasible argumentation.
 To appear in: The Handbook of Philosophical Logic.

\bibitem{Rei-89}
{\sc Reinfrank}, M. 1989.
Fundamentals and Logical Foundations of Truth Maintenance.
Ph.D. thesis, Link\"{o}ping University,
Link\"{o}ping, Sweden.

\bibitem{Rei-80}
{\sc Reiter}, R. 1980.
A logic for default reasoning.
Artificial Intelligence, {\bf 13}:81--132.

\bibitem{Roo-97a}
{\sc Roos}, N. 1997a.
Reasoning by cases using arguments.
Dutch German Workshop on Non Monotonic Reasoning (DGNMR-97).

\bibitem{Roo-97b}
{\sc Roos}, N. 1997b.
On Resolving Conflicts Between Arguments. 
Technical Report TR-CTIT-97-37 Centre for Telematics and Information Technology, University of Twente, Enschede. ISSN 1381-3625.

\bibitem{Roo-98}
{\sc Roos}, N. 1998.
Reasoning by cases in Default Logic.
Artificial Intelligence {\bf 99}:165--183.

\bibitem{Sim-92}
{\sc Simari}, G. R., R. P. {\sc Loui}. 1992. A mathematical
treatment of defeasible reasoning and its implementation.
Artificial Intelligence, {\bf 53}:125--157.

\bibitem{Tou-58}
{\sc Toulmin}, S. 1958.
The uses of argument. Cambridge University Press.

\bibitem{Vre-91}
{\sc Vreeswijk}, G. 1991.
 The feasibility of defeat in defeasible reasoning.
 Proceedings of the second International Conference on
Principles of Knowledge Representation and Reasoning, 526--534.

\bibitem{Vre-92}
{\sc Vreeswijk}, G. 1992.
 Nonmonotonicity and partiality in defeasible argumentation.
 W. van der Hoek, J.-J. Ch. Meyer, Y. H. Tan , C. Witteveen, (editors).
 Non-monotonic reasoning and partial semantics. Ellis Horwood, 107--155.

\bibitem{Vre-97}
{\sc Vreeswijk}, G. 1997.
Abstract argumentation systems.
Artificial Intelligence, {\bf 90}:225--279.

\bibitem{Wit-93}
{\sc Witteveen}, C. and G. {\sc Brewka}. 1993.
Skeptical reason maintenance and belief revision.
Artificial Intelligence, {\bf 61}:1--36.
\end{thebibliography}
\end{document}